\newtheorem{thm}{Theorem}
\newtheorem{lem}{Lemma}
\newtheorem{definition}{Definition}
\newtheorem{proposition}{Proposition}
\newtheorem{remark}{Remark}
\newcounter{AlgCount}
 \newcommand{\mc}[0]{\mathcal }
 \newcommand{\mb}[0]{\mathbb }
\newcommand\norm[2][\Tnorm]{\ensuremath{{\left\Vert #2 \right\Vert}_{#1}}}
\newcommand\Tinnerprod{}
\newcommand{\innerprod}[3][\Tinnerprod]{\ifthenelse{\equal{#1}{}}{\ensuremath{\left<#2,#3\right>}}{\ensuremath{\left<#2,#3\right>_{#1}}}}
\newcommand\vect[1]{\mathbf #1}
\definecolor{DarkBlue}{rgb}{0.1,0.1,0.5}
\definecolor{BrickRed}{RGB}{203,65,84}
\newcommand\PR[1]{\ensuremath{ {\mathrm{P}}\!\left[#1\right]}}
\newcommand\Tex{}
\newcommand\EX[2][\Tex]{
\ifthenelse{\equal{#1}{}}{{\mathbb E}\left[#2\right]}{\ensuremath{{\mathbb E}_{#1}\left[ #2\right]}}}
\newcommand\Var[2][\Tex]{
\ifthenelse{\equal{#1}{}}{{\mathrm{Var} }[#2]}{\ensuremath{\mathrm{Var}_{#1}\left[ #2\right]}}}
\newcommand\ignore[1]{}
\newcommand\defeq{\coloneqq}
\newcommand\eqdef{\eqqcolon}
\newcommand{\reals}{\mathbb R} 
\newcommand\comp[1]{ \overline{#1}}
\newcommand{\transp}[1]{{#1}^{\top}} 
\renewcommand{\d}{d} 
\renewcommand{\d}{d} 
\newcommand{\X}{\mathcal X}
\newcommand{\vb}{\vect{b}}
\newcommand{\vc}{\vect{c}}
\newcommand{\vv}{\vect{v}}  
\newcommand{\vx}{\vect{x}}  
\newcommand{\vy}{\vect{y}}  
\newcommand{\vz}{\vect{z}}
\newcommand{\mA}{\vect{A}}  
\newcommand{\mB}{\vect{B}} 
\newcommand{\mC}{\vect{C}} 
\newcommand{\mD}{\vect{D}}
\newcommand{\mG}{\vect{G}}
\newcommand{\mH}{\vect{H}}
\newcommand{\mI}{\vect{I}}
\newcommand{\mP}{\vect{P}}
\newcommand{\mQ}{\vect{Q}}
\newcommand{\mR}{\vect{R}}
\newcommand{\mS}{\vect{S}}
\newcommand{\mZ}{\vect{Z}}
\renewcommand{\l}{\ell}
\newcommand{\powpar}[2]{{#1}^{(#2)}}
\newcommand{\Xl}{\powpar{X}{\l}}
\newcommand{\Xk}{\powpar{X}{k}}
\newcommand{\Xlt}{\powpar{\tilde X}{\l}}
\newcommand{\Xlc}{\powpar{\check X}{\l}}
\newcommand{\El}{\powpar{U}{\l}}
\newcommand{\xlh}{\powpar{x}{\l}} 
\newcommand{\xkh}{\powpar{x}{k}} 
\newcommand{\sk}{\powpar{s}{k}}
\renewcommand{\sl}{\powpar{s}{\l}}
\newcommand{\skt}{\powpar{\tilde s}{k}}
\newcommand{\slt}{\powpar{\tilde s}{\l}}
\newcommand{\slh}{\powpar{\hat s}{\l}}
\newcommand{\skh}{\powpar{\hat s}{k}}
\newcommand{\ek}{\powpar{e}{k}}
\newcommand{\el}{\powpar{e}{\l}}
\newcommand{\rl}{\powpar{r}{\l}}
\newcommand{\rlt}{\powpar{\tilde r}{\l}}
\newcommand{\rkt}{\powpar{\tilde r}{k}}
\newcommand{\xh}{x} 
\newcommand{\rh}{\hat r}
\newcommand{\rlh}{\powpar{\hat r}{\l}}
\newcommand{\rlc}{\powpar{\check r}{\l}}
\newcommand{\sh}{\hat s}
\newcommand{\mul}{\powpar{\mu}{\l}}
\newcommand{\al}{\powpar{\alpha}{\l}}
\newcommand{\bl}{\powpar{\beta}{\l}}
\newcommand{\cl}{\powpar{\gamma}{\l}}
\newcommand{\perr}{p}
\newcommand{\errwin}{u}
\newcommand{\gh}{\hat g}
\newcommand{\mGh}{{\hat \mG}}
\newcommand{\mGt}{{\tilde \mG}}
\newcommand{\Fstar}{\mc F^\star}
\newcommand{\Fali}{\powpar{\mc F}{\l}_{1,i}}
\newcommand{\Fbli}{\powpar{\mc F}{\l}_{2,i}}
\newcommand{\Falic}{\powpar{\bar{\mc F}}{\l}_{1,i}}
\newcommand{\Fblic}{\powpar{\bar{\mc F}}{\l}_{2,i}}
\newcommand{\cX}{\mc X}
\renewcommand{\d}{\mathrm{d}}
\newcommand{\im}{\mathrm{i}}
\newcommand{\Wl}{\powpar{W}{\l}}
\newcommand{\abs}[1]{\left\vert #1 \right\vert}
\newcommand{\abss}[1]{\vert #1 \vert}
\newcommand{\E}[1]{\mb E \! \left[ #1 \right]}
\renewcommand{\EX}[2]{\mb E_{#1} \! \left[ #2 \right]}
\newcommand{\Ck}{\powpar{\mC}{k}}
\newcommand{\Cl}{\powpar{\mC}{\l}}
\newcommand{\Rl}{\powpar{\mR}{\l}}
\newcommand{\Rlt}{\powpar{\tilde \mR}{\l}}
\newcommand{\Rkt}{\powpar{\tilde \mR}{k}}
\newcommand{\vxk}{\powpar{\vx}{k}}
\newcommand{\vxl}{\powpar{\vx}{\l}}
\newcommand{\vyl}{\powpar{\vy}{\l}}
\newcommand{\sigkl}{{\powpar{\sigma}{k,\l}}}
\newcommand{\sigll}{{\powpar{\sigma}{\l,\l}}}
\newcommand{\erf}{\mathrm{erf}}
\newcommand{\atan}{\mathrm{arctan}}
\newcommand{\tr}{\mathrm{tr}}
\newcommand{\bxi}{\boldsymbol \xi}
\newcommand{\bnu}{\boldsymbol \nu}
\newcommand{\So}{S_\mathrm{off}}
\newcommand{\Sonu}{S_\nu}
\newcommand{\mSk}{\powpar{\mS}{k}}
\newcommand{\mSl}{\powpar{\mS}{\l}}
\newcommand{\mXl}{\powpar{\vx}{\l}}
\newcommand{\mXk}{\powpar{\vx}{k}}
\begin{document}
%

\title{Robust Nonparametric Nearest Neighbor \\ 
Random Process Clustering}

\author{Michael Tschannen,~\IEEEmembership{Student Member,~IEEE}, and Helmut B{\"o}lcskei,~\IEEEmembership{Fellow,~IEEE} \thanks{The authors are with the Department of Information Technology and Electrical Engineering, ETH Zurich, Switzerland (e-mail: michaelt@nari.ee.ethz.ch; boelcskei@nari.ee.ethz.ch).

Part of this paper was presented at the 2015 IEEE International Symposium on Information Theory (ISIT) \cite{tschannen2015nonparametric}.}}


\maketitle
\IEEEpeerreviewmaketitle

\begin{abstract}

We consider the problem of clustering noisy finite-length observations of stationary ergodic random processes according to their generative models without prior knowledge of the model statistics and the number of generative models. Two algorithms, both using the $L^1$-distance between estimated power spectral densities (PSDs) as a measure of dissimilarity, are analyzed. 
The first one, termed nearest neighbor process clustering (NNPC), relies on partitioning the nearest neighbor graph of the observations via spectral clustering. The second algorithm, simply referred to as $k$-means (KM), 
consists of a single $k$-means iteration with farthest point initialization and was considered before in the literature, albeit with a different dissimilarity measure. 
We prove that both algorithms succeed with high probability in the presence of noise and missing entries, and even when the generative process PSDs overlap significantly, all provided that the observation length is sufficiently large. Our results quantify the tradeoff between the overlap of the generative process PSDs, the observation length, the fraction of missing entries, and the noise variance. Finally, we provide extensive numerical results for synthetic and real data and find that NNPC outperforms state-of-the-art algorithms in human motion sequence clustering.
\end{abstract}

\begin{IEEEkeywords}
Clustering, stationary random processes, time series, nonparametric, $k$-means, nearest neighbors.
\end{IEEEkeywords}




%

\section{Introduction}

Consider a set of $N$ noisy length-$M$ observations of stationary ergodic discrete-time random processes stemming from $L < N$ (typically $L \ll N$) different generative processes, referred to as generative models henceforth. We want to cluster these observations according to their generative models without prior knowledge of the model statistics and the number of generative models, $L$.
This problem arises in many domains of science and engineering where (large amounts of) data have to be divided into meaningful categories in an unsupervised 
fashion.  
Concrete examples include 
audio and video sequences \cite{wang2000multimedia}, electrocardiography (ECG) recordings \cite{kalpakis2001distance}, industrial production indices~\cite{corduas2008time}, and financial time series \cite{marti2016clustering,marti2017clustering}.
 
Common measures for quantifying the (dis)similarity of generative models typically rely on process statistics estimated from observations using either parametric or nonparametric methods. Parametric methods 
yield good performance 
when the (parametric) model the estimation is based on matches the true (unknown) model well. Nonparametric methods typically outperform parametric ones in case of model mismatch \cite{stoica2005spectral}, a likely scenario in many practical applications. 
Existing random process clustering methods quantify the dissimilarity of observations using the Euclidean distance between estimated process model parameters \cite{corduas2008time, marti2016clustering}, cepstral coefficients \cite{kalpakis2001distance, boets2005clustering}, or normalized periodograms \cite{caiado2006periodogram}. Other methods rely on divergences (e.g., Kullback-Leibler divergence) between normalized periodograms \cite{kakizawa1998discrimination, vilar2004discriminant}, use the distributional distance \cite{gray2009probability} between processes \cite{ryabko2010clustering, khaleghi2012online, khaleghi2016consistent}, or the earth mover's distance between copulas of the processes \cite{marti2016clustering, marti2015optimal}. In all cases the resulting distances are fed into a standard clustering algorithm such as $k$-means or hierarchical clustering. Another line of work employs a Bayesian framework to infer the cluster assignments, e.g., according to a maximum a posteriori criterion \cite{xiong2004time}. While many of these approaches have proven effective in practice, corresponding analytical performance results are scarce. Moreover, existing analytical results are mostly concerned with the asymptotic regime where the observation length goes to infinity while the number of observations is fixed (see, e.g., \cite{kakizawa1998discrimination, vilar2004discriminant, corduas2008time, ryabko2010clustering, khaleghi2012online, borysov2014asymptotics}); the finite observation-length regime has attracted significantly less attention \cite{ryabko2010clustering, ryabko2013binary, khaleghi2016consistent, marti2016clustering}.

\paragraph*{Contributions}
We consider two process clustering algorithms that apply to nonparametric generative models and 
employ 
the $L^1$-distance between estimated power spectral densities (PSDs)
as dissimilarity measure. 
The first one, termed nearest neighbor process clustering (NNPC), relies on partitioning the $q$-nearest neighbor graph ($q$ is a parameter of the algorithm) of the observations via normalized spectral clustering 
and, to the best of our knowledge, has not been considered in the literature before. NNPC is inspired by the thresholding-based subspace clustering (TSC) algorithm \cite{heckel2014robust}, which 
clusters a set of (high-dimensional) data points into a union of low-dimensional subspaces without prior knowledge of the subspaces, their dimensions, and their orientations. 
The second algorithm, which will be referred to as KM, 
consists of a single $k$-means iteration with farthest point initialization \cite{katsavounidis1994new} and was first proposed in \cite{ryabko2010clustering}, albeit with a different dissimilarity measure.

Assuming real-valued stationary ergodic Gaussian processes with arbitrary (continuous) PSDs as generative models, we characterize the performance of NNPC and KM analytically for finite-length observations---potentially with missing entries---contaminated by independent additive real-valued white Gaussian noise. We find that both algorithms succeed with high probability even when the PSDs of the generative models exhibit significant overlap, all provided that the observation length is sufficiently large and the noise variance is sufficiently small. Our analytical results quantify the tradeoff between observation length, 
fraction of 
missing 
entries, noise variance, and distance between the (true) PSDs of the generative models.

Furthermore, we prove that treating the finite-length observations as vectors in Euclidean space and clustering them using the TSC algorithm \cite{heckel2014robust} (which inspired NNPC) results in performance strictly inferior to that obtained for NNPC. We argue that the underlying cause is to be found in TSC employing spherical distance as dissimilarity measure, thereby ignoring the stationary process structure of the observations. In a broader context this suggests that clustering observations of random processes using dissimilarity measures conceived with Euclidean geometry in mind, a popular ad-hoc approach in practice \cite{esling2012time}, can lead to highly suboptimal performance.

We evaluate the performance of NNPC and KM on synthetic and on real data, and find that NNPC outperforms state-of-the-art algorithms in human motion sequence clustering. Furthermore, NNPC and KM are shown to yield better clustering performance than single linkage and average linkage hierarchical clustering based on the $L^1$-distance between estimated PSDs. 
We also compare ($L^1$-based) NNPC and KM to their respective $L^2$ and $L^\infty$-cousins and find that the original variants consistently yield better or the same results.

\paragraph*{Relation to prior work} 
Numerical studies of time series clustering based on spectral clustering of the $q$-nearest neighbor graph using different dissimilarity measures (albeit not the $L^1$-distance, or, for that matter, other $L^p$-distances, between estimated PSDs) were reported in \cite{tucci2011analysis}. 
In \cite{ferreira2016time} time series clustering is formulated as a community detection problem in graphs, but no analytical performance results are provided. 
KM with distributional distance as dissimilarity measure was proven in \cite{ryabko2010clustering}---for more general (i.e., not necessarily Gaussian) generative models---to deliver correct clustering with probability approaching $1$ as the observation length goes to infinity. 
We note, however, that estimating the distributional distance is computationally more demanding than estimating the $L^1$-distance between PSDs.

\paragraph*{Notation} 
We use lowercase boldface letters to denote vectors, uppercase boldface letters to designate matrices, and the superscript $\transp{}$ stands for transposition. $v_i$ is the $i$th entry of the vector $\vv$. For the matrix $\mA$, $\mA_{i,j}$ denotes the entry in the $i$-th row and $j$-th column, $\mA_i$ its $i$-th row, $\norm[2 \to 2]{\mA}$ its spectral norm, $\norm[F]{\mA} \defeq (\sum_{i,j} \abs{\mA_{i,j}}^2)^{1/2}$ its Frobenius norm, and (for $\mA$ square) $\tr(\mA) = \sum_i \mA_{i,i}$ its trace. $\mI$ and $\mathbf 1$ stand for the identity matrix and the all ones matrix (the latter not necessarily square), 
respectively. For matrices $\mA$ and $\mB$ of identical dimensions, $\mA \circ \mB$ is the Hadamard product, i.e., $(\mA \circ \mB)_{i,j} = \mA_{i,j} \mB_{i,j}$. For the vector $\vb \in \{0,1\}^n$, 
we let $\mP_\vb \defeq \mathrm{diag}(b_1,\dots,b_n)$. The $i$-th element of a sequence $x$ is denoted by $x[i]$. For a positive integer $N$, $[N]$ stands for the set $\{1,2,\dots,N\}$. 
The (circular) convolution of $f,g \in L^2([0,1))$ is defined as $(f \ast g)(y) \defeq \int_0^1 f(x) \tilde g(y-x) \d x$, $y \in [0,1)$, where $\tilde g$ is the $1$-periodic extension of $g$. 
$\log$ refers to the natural logarithm. $\E{X}$ denotes the expectation of the random variable $X$ and the notation $Y \sim X$ indicates that the random variable $Y$ has the same distribution as $X$. We say that a subgraph $H$ of a graph $G$ is connected if every pair of nodes in $H$ can be joined by a path with nodes exclusively in $H$. A connected subgraph $H$ of $G$ is called a connected component of $G$ if there are no edges between $H$ and the remaining nodes in $G$. 

\section{Formal problem statement and algorithms} \label{sec:ProbAlgo}

We consider the following clustering problem. Given the unlabeled data set $\cX = \cX_1 \cup \ldots \cup \cX_L$ of cardinality $N$, where $\X_\l = \{ \xlh_i \}_{i=1}^{n_\l}$ contains noisy length-$M$ observations $\xlh_i$---possibly with missing entries---of the real-valued discrete-time stationary ergodic random process $\Xl[m]$, $m \in \mb Z$, corresponding to the $\l$-th generative model, find the partition $\cX_1,\dots,\cX_L$. The statistics of the generative models and of the noise processes, and the number of generative models, are all assumed unknown.

Both clustering algorithms considered in this paper are based on the following measure for the distance between pairs of processes. With the PSD of $\Xl$ denoted by $\sl(f)$, $f \in [0,1)$, we define the distance (dissimilarity) between the processes $\Xk$ and $\Xl$ as $d(\Xk, \Xl) \defeq \frac{1}{2} \int_0^1 \abss{\sk(f) - \sl(f)} \d f$. 
As argued below, for the algorithms to be meaningful, the different processes have to be of the same or at least of comparable power, which motivates the normalization $\int_0^1 \sl(f) \d f = 1$, $\l \in [L]$. Now, this implies that $d(\Xk,\Xl) \leq \frac{1}{2} \int_0^1 \abss{\sk(f)} \d f +   \frac{1}{2} \int_0^1 \abss{\sl(f)} \d f = \frac{1}{2} \int_0^1 \sk(f) \d f +   \frac{1}{2} \int_0^1 \sl(f) \d f = 1$, and hence $d(\Xk,\Xl) \in [0,1]$. The distance measure
$d(\Xk,\Xl)$ is close to $1$ when $\sk$ and $\sl$ are concentrated on disjoint frequency bands and close to $0$ when they exhibit similar support sets and shapes. 
In contrast, for general $L^p$-distances $d_{L^p}(\Xk, \Xl) \defeq  (\int_0^1 \abss{\sk(f) - \sl(f)}^p \d f)^\frac{1}{p}$, with $p >1$, it is easy to see that $\int_0^1 \sl(f) \d f = 1$, $\l \in [L]$, does not imply a uniform upper bound for $d_{L^p}(\Xk, \Xl)$. For example, $d_{L^\infty}(\Xk, \Xl)$ can become arbitrarily large if we set $\sk(f) = 1$, $f \in [0,1)$, and let $\sl$ have a sharp peak at some frequency $f_0 \in [0,1)$, while maintaining $\int_0^1 \sl(f) \d f = 1$.

We now present the NNPC and the KM algorithms.
Recall that NNPC is inspired by the TSC algorithm introduced in \cite{heckel2014robust}, and KM is obtained by replacing the distance measure in Algorithm 1 in \cite{ryabko2010clustering} by the distance measure $d$ defined above. In principle, NNPC and KM are applicable to general (real-valued) time series, in particular also to non-stationary random processes, but the definition of $d$ above is obviously motivated by stationarity. 

{\it \refstepcounter{AlgCount} \label{alg:TSC}
{\bf The NNPC algorithm. } Given a set $\cX$ of $N$ length-$M$ observations, the number of generative models $L$ (the estimation of $L$ from $\cX$ is discussed below), and the parameter $q$, carry out the following steps. \\
{\bf Step 1:} For every $\xh_i \in \cX$, estimate the PSD $\sh_i(f)$ via the Blackman-Tukey (BT) estimator according to
\vspace{-0.05cm}
\begin{align}
\sh_i(f) &\defeq \sum_{m=-M+1}^{M-1} g[m] \rh_i[m] e^{-\im 2 \pi f m}, \quad \text{where} \label{eq:BlackmanTukey} \\
\rh_i[m] &\defeq \frac{1}{M} \!\! \sum_{n = 0}^{M-\vert m \vert-1} \xh_i[n+m] \xh_i[n], \quad \vert m \vert \leq M-1, \nonumber
\end{align}
and $g[m]$, $m \in \mb Z$, is an even window function (i.e., $g[m] = g[-m]$) with $g[m] = 0$ for $|m| \geq M$, and with bounded non-negative discrete-time Fourier transform (DTFT). \\ 
{\bf Step 2:} For every $\xh_i \in \cX$, identify the set $\mc T_i \subset [N] \backslash \{ i \}$ of cardinality $q$ defined through
\begin{equation}
d(\xh_i,\xh_j) \leq d(\xh_i,\xh_v), \quad \text{for all} \; j \in \mc T_i \; \text{and all} \; v \notin \mc T_i, \nonumber
\end{equation}
where 
\begin{equation}
d(\xh_i, \xh_j) \defeq \frac{1}{2} \int_0^1 \abs{\sh_i(f) - \sh_j(f)} \d f. \label{eq:distest}
\end{equation}
{\bf Step 3:} Let $\vz_j \in \reals^N$ be the vector with $i$th entry $\exp(-2 \, d(\xh_i,\xh_j))$, if $i \in \mc T_j$, and $0$, if $i \notin \mc T_j$. \\
{\bf Step 4:} Construct the adjacency matrix $\mA$ according to $\mA = \mZ + \transp{\mZ}$, where $\mZ = [\vz_1 \, \dots \, \vz_N]$. \\
{\bf Step 5:} Apply normalized spectral clustering \cite{luxburg2007tutorial} to $(\mA,L)$.}

Step 2 of NNPC determines the $q$-nearest neighbors of every observation w.r.t. to the distance measure $d$. We henceforth denote the corresponding nearest neighbor graph with adjacency matrix $\mA$ constructed in Step 4 by $G$. 
The parameter $q$ determines the minimum degree of $G$. 
Choosing $q$ too small results in the observations stemming from a given generative model forming multiple connected components in $G$ and hence not being assigned to the same cluster in Step 5. This problem can be countered by taking $q$ larger, which, however, increases the chances of observations originating from different generative models being connected in $G$, thereby increasing the likelihood of incorrect cluster assignments. These tradeoffs are identical to those associated with the choice of the parameter $q$ in TSC \cite{heckel2014robust}. Note that spectral clustering is robust in the sense that it may deliver correct clustering even when $G$ contains edges connecting observations that originate from different generative models, as long as the corresponding edge weights are sufficiently small.

The number of generative models, $L$, may be estimated in Step 4 based on the adjacency matrix $\mA$ using the \emph{eigengap heuristic} \cite{luxburg2007tutorial} (note that $L$ is needed only in Step 5), which relies on the fact that the number of zero eigenvalues of the normalized Laplacian of $G$ equals the number of connected components in $G$.

{\it \refstepcounter{AlgCount} \label{alg:kmeans}
{\bf The KM algorithm \cite{ryabko2010clustering}. } Given a set $\cX$ of $N$ length-$M$ observations and the number of generative models $L$, carry out the following steps. \\
\setlength{\parindent}{1.7cm}
\setlength{\parskip}{0cm}
{\bf Step 1:} Initialize $c_1 \defeq 1$ and $\hat \cX_\l \defeq \{\}$, for all $\l \in [L]$. \\ 
{\bf Step 2:} For every $\xh_i \in \cX$, estimate the PSD $\sh_i(f)$ via the BT estimator \eqref{eq:BlackmanTukey}. \\
{\bf Step 3:} {\bf for $p = 2$ to $L$ do:}

$c_p \defeq \arg \max_{i \in [N]} \big(\min_{\l \in [p-1]} d(\xh_i,\xh_{c_\l})\big)$,\\
with $d$ as defined in \eqref{eq:distest}. \\ 
{\bf Step 4:} {\bf for $i = 1$ to $N$ do:}

$\l^\star \gets \arg \min_{\l \in [L]} \, d(\xh_i,\xh_{c_\l})$

$\hat \cX_{\l^\star} \gets \hat \cX_{\l^\star} \cup \{ \xh_i \}$
}

KM selects the cluster centers in Step 3 and determines the assignments of the observations to these cluster centers in Step 4. Specifically, the algorithm selects $\xh_1$ as the first cluster center and then recursively determines the remaining cluster centers by maximizing the minimum distance to the cluster centers  already chosen. In Step 4, it then assigns each observation to the closest cluster center (see Fig. \ref{fig:kmillustration}). 
Intuitively, KM recovers the correct cluster assignments if the clusters are separated well enough. 
In practice, performing additional $k$-means iterations by alternating between cluster center refinement (simply by taking the refined center to be the average of the observations assigned to it) and re-assignment of the data points to the refined cluster centers, can often improve performance.  
Numerical results on the effect of additional $k$-means iterations are provided in Sec.~\ref{sec:numres}. Our analytical results, however, all pertain to the case of a single $k$-means iteration per the definition of the KM algorithm above. 
Note that besides the number of clusters, $L$, KM does not have other parameters such as $q$ in NNPC.

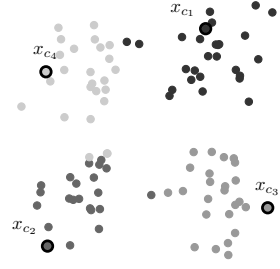
\begin{figure}[t]
\centering
\begin{tikzpicture}[scale = 0.7] 
\begin{axis}[clip mode=individual,axis equal,hide axis]
\addplot+[mark=*, color=black!80, only marks, mark options={solid}] table[x index=0,y index=1]{./data/fpillustration/X1.dat};
\addplot+[mark=*, color=black!60, only marks, mark options={solid}] table[x index=0,y index=1]{./data/fpillustration/X2.dat};
\addplot+[mark=*, color=black!40, only marks, mark options={solid}] table[x index=0,y index=1]{./data/fpillustration/X3.dat};
\addplot+[mark=*, color=black!20, only marks, mark options={solid}] table[x index=0,y index=1]{./data/fpillustration/X4.dat};

\node[label={120:$\color{black}x_{c_1}$},circle,inner sep=2pt,draw=black,line width=1.5pt] at (axis cs:0.7404,1.472) {};
\node[label={120:$\color{black}x_{c_2}$},circle,inner sep=2pt,draw=black,line width=1.5pt] at (axis cs:-1.564,-1.699) {};
\node[label={\color{black}$x_{c_3}$},circle,inner sep=2pt,draw=black,line width=1.5pt] at (axis cs:1.645,-1.14) {};
\node[label={\color{black}$x_{c_4}$},circle,inner sep=2pt,draw=black,line width=1.5pt] at (axis cs:-1.587,0.8433) {};

\end{axis}
\end{tikzpicture}
\vspace{-0.15cm}
\caption{\label{fig:kmillustration} 
Clustering of an example data set in $\reals^2$ determined by KM with farthest point intialization and based on Euclidean distance.}
\end{figure}

Both NNPC and KM are based on comparisons of distances between observations, and are, therefore, meaningful only if the underlying processes $\Xl$ are of comparable power $\int_0^1 \! \sl(f) \d f$. Indeed, 
when this is not the case, 
the distance between the observations 
is determined predominantly by the difference in power rather than the difference in PSD support sets and shapes. Note that the assumption of comparable power is not critical as,  in practice, we can normalize the observations.

The choice of the window function $g$ in \eqref{eq:BlackmanTukey} determines the bias-variance tradeoff of the BT estimator and through the distance estimates $d(\xh_i,\xh_j)$ ultimately the bias-variance tradeoff of NNPC and KM. For a discussion of window choice considerations for the BT estimator in a general context, we refer the reader to \cite[Sec. 2.6]{stoica2005spectral}. We only remark here that the variance of the BT estimator goes to $0$ as $M \to \infty$ under rather mild conditions on the process PSD and for $g \in \ell_1$ \cite[Appendix B4]{kay1988modern}; the statistical data model employed in this paper (and described in the next section) satisfies these conditions on the PSDs.

We finally briefly discuss computational aspects of NNPC and KM for $L \ll N$, the situation typically encountered in practice. 
The BT PSD estimates \eqref{eq:BlackmanTukey} can be computed efficiently using the FFT. NNPC is a spectral clustering algorithm and as such requires the $N(N-1)/2$ distances between all pairs of observations to construct $G$. NNPC furthermore needs to determine the $L$ eigenvectors corresponding to the $L$ smallest eigenvalues of the $N \times N$ normalized graph Laplacian, which requires $O(N^3)$ operations (without exploiting potentially present structural properties of the Laplacian such as, e.g., sparsity).  
Spectral clustering then performs standard $k$-means clustering on the rows of the resulting $N \times L$ matrix of eigenvectors. 
The computational complexity of NNPC therefore becomes challenging for large $N$. Several spectral clustering methods suitable for data sets of up to millions of observations are available in the literature, see, e.g., \cite{yan2009fast, li2011timeand, chen2011parallel}. KM, on the other hand, is computationally considerably less expensive, requiring only $O(NL^2)$ distance computations.

We finally note that both NNPC and KM along with the corresponding analytical performance guarantees presented in the next section can easily be generalized to stationary ergodic vector-processes $\mXl[m] \in \reals^n$, $m \in \mb Z$. Specifically, with the spectral density matrices $\mSl(f) \defeq \sum_{m=-\infty}^\infty \mb E\big[\mXl[m] \transp{(\smash{\mXl[0]})\!}\big] e^{-\im 2 \pi f m} \in \reals^{n \times n}$, $\l \in [L]$, one defines the distance measure
\begin{equation*}
d(\mXk, \mXl) = \sum_{u,v \in [n]} \int_0^1 \abs{\smash{\mSk_{u,v}(f) - \mSl_{u,v}(f)}} \d f
\end{equation*} and employs the BT estimator in \eqref{eq:BlackmanTukey} component-wise to estimate $\mSl(f)$. As this requires the computation of distances between all scalar random process components, evaluating $d(\mXk, \mXl)$ in the vector case incurs $n(n+1)/2$ (exploiting the symmetry of $\mSl(f)$) times the cost in the scalar case. 
All other steps of NNPC and KM remain unchanged and hence have the same computational complexity as in the scalar case. 
For simplicity of exposition, we focus on the scalar case throughout the paper.

\section{Analytical performance results} \label{sec:mainres}

We start by describing the statistical data model underlying our analytical performance results. Recall that both NNPC and KM are, in principle, applicable to general real-valued 
time series including non-stationary processes. 
The performance analysis conducted here applies, however, to stationary processes. In addition, we take into account additive noise and potentially missing entries. 
Specifically, we assume that the $\xlh_i$ are obtained as contiguous length-$M$ observations of $\Xlc[m] \defeq \El[m](\Xl[m] + \Wl[m]), m \in \mb Z$, where $\El$ is a Bernoulli process with i.i.d. entries according to $\PR{\El[m] = 1} = 1- \PR{\El[m] = 0} = \perr > 0$ (we henceforth refer to $p$ as sampling probability), 
$\Xl$ is zero-mean stationary Gaussian with PSD $\sl(f)$, and $\Wl$ is a zero-mean white Gaussian noise process with variance $\sigma^2$. The autocorrelation functions (ACFs) $\rl[m] \defeq \int_0^1 \sl(f) e^{\im 2 \pi f m} \d f$ of the $\Xl$ are assumed absolutely summable, i.e., $\sum_{m = -\infty}^\infty \abss{\rl[m]} < \infty$, $\l \in [L]$, which implies continuity of the $\sl(f)$ and thereby ergodicity of the corresponding processes $\Xl$ \cite{maruyama1949harmonic}. 
Moreover, we take the PSDs to be normalized according to $\int_0^1 \sl(f) \d f = 1$, $\l \in [L]$, and we let $B \defeq \max_{\l \in [L]} \sup_{f \in [0,1)} \sl(f)$. We further assume that $\El$, $\Xl$, and $\Wl$ are mutually independent. As a consequence, the noisy process $\Xlt[m] \defeq \Xl[m] + \Wl[m]$ and the Bernoulli process $\El$ are jointly stationary ergodic so that $\Xlc[m] = \El[m] \Xlt[m]$ is stationary ergodic by \cite[Prop. 3.36]{white2014asymptotic}. 
Furthermore, we denote the ACF of the noisy process $\Xlt[m]$ by $\rlt[m]$ and note that $\rlt[m] = \rl[m] + \sigma^2 \delta[m]$. It follows from $\Xlc[m] = \El[m] \Xlt[m]$ that $\rlc [m] = \errwin[m] \rlt[m]$, where $\errwin[m] \defeq p$ for $m = 0$, and $\errwin[m] \defeq p^2$, else. 
For each $\l$, the $\xlh_i$ may either stem from independent realizations of $\Xlc$ or correspond to different (possibly overlapping) length-$M$ segments of a given realization of $\Xlc$. In the latter case the $\xlh_i$ will not be statistically independent in general. This is, however, not an issue as statistical independence is not required in our analysis, neither across observations stemming from a given generative model nor across observations originating from different generative models.

Multiplication of $\Xlt$ by the Bernoulli process $\El$ models, e.g., a sampling device which  
acquires only every $(1/p)$-th sample on average. Moreover, in practice we could deliberately subsample in order to speed up the computation of the distances $d$ when the observation length $M$ is large. 
Specifically, with observation length $M$ and sampling probability $p$, we get $\approx (1-p)M$ entries of $\xlh_i$ that are set to $0$, which can be exploited when computing the BT estimates using the FFT~\cite{skinner1976pruning}.

Na{\"\i}vely applying the BT estimator to the $\xlh_i$ delivers PSD estimates that, owing to $\rlc [m] = \errwin[m] \rlt[m]$, can be severely biased compared to estimates that would be obtained from observations with no missing entries.  
Indeed, as $\rlc [0] = \perr \, \rlt[0]$ and $\rlc [m] = \perr^2 \rlt[m]$, for $m \neq 0$, for small $\perr$, $\errwin[m]$ assigns a much larger weight to  
lag $m=0$ than to the lags $m \neq 0$. To correct this bias, 
we assume in the remainder of the paper (in particular also in the analytical results below) that the BT estimates in \eqref{eq:BlackmanTukey} are computed for 
the window function $\gh[m] \defeq g[m]/ \errwin[m]$, $m \in \mb Z$, i.e., $g$ in Algorithms \ref{alg:TSC} and \ref{alg:kmeans} is replaced by $\gh$. 
While $\gh$ remains even and supported on $\{-M+1, \ldots, M-1\}$, BT PSD estimates based on $\gh$ are not guaranteed to be non-negative (in contrast to estimates based on $g$ directly \cite[Sec. 2.5.2]{stoica2005spectral}) as the DTFT of $\gh$ may not be non-negative. This is, however, not an issue as we consider distances between PSDs only and do not explicitly make use of the positivity property of PSDs. We note that bias correction requires knowledge of $\perr$, which can be obtained in practice simply by estimating the average number of non-zero entries in the $\xlh_i$. In addition, we will assume that $g[0] = 1$ and $g$ has a bounded DTFT $g(f)$, i.e., $0 \leq g(f) \leq A < \infty$, $f \in [0,1)$. 
An example of such a window function is the Bartlett window (see \eqref{eq:defbartlettwin}) used in the experiments in Sec. \ref{sec:numres}. 
Our performance results will be seen to depend on the maximum ACF moment $\mu_{\max} \defeq \max_{\l \in [L]} \mul$, where $\mul \defeq \sum_{m=-\infty}^\infty \abss{h[m]} \abss{\rl[m]}$ with
\begin{equation} \label{eq:weightedwindow}
h[m] \defeq \left\{ \begin{matrix*}[l] 1 - g[m](1 - \vert m \vert /M), & \text{for} \; \vert m \vert < M \\
1, & \text{otherwise.} \end{matrix*} \right.
\end{equation}

We are now ready to state our main results. For the NNPC algorithm, we provide a sufficient condition for the following \emph{no false connections (NFC) property} to hold. Recall that $G$ is the nearest neighbor graph with adjacency matrix $\mA$, as constructed in Step 4 of NNPC.
\begin{definition}[No False Connections Property] 
$G$ satisfies the no false connections property if, for all $\l \in [L]$, all nodes corresponding to $\cX_\l$ are connected exclusively to nodes corresponding to $\cX_\l$. 
\end{definition}
We henceforth say that ``NNPC succeeds'' if the NFC property is satisfied. Although the NFC property alone does not guarantee correct clustering, it was found to be a sensible performance measure for subspace clustering algorithms (see, e.g., \cite{elhamifar_sparse_2013, soltanolkotabi2014robust, heckel2014robust}). To ensure correct clustering one would additionally need the subgraph of $G$ corresponding to $\cX_\l$ to be connected, for each $\l \in [L]$ \cite[Prop.~4; Sec.~7]{luxburg2007tutorial}. 
Establishing conditions for this to hold appears to be difficult, at least for the statistical data model considered here.

\begin{thm} \label{thm:NNPC}
Let $\cX$ be generated according to the statistical data model described above and assume that $q \leq \min_{\l \in[L]} (n_\l - 1)$. Then, 
the clustering condition
\begin{align} 
&\min_{\substack{k, \l \in [L] \colon \\ k \neq \l}} d(\Xk, \Xl) \nonumber \\
& \qquad >  \frac{8 \sqrt{2} A (B \!+\! \sigma^2 \!+\! \sqrt{2} (1\!+\! p) (1 \!+\! \sigma^2))}{\perr^2} \sqrt{\frac{\log M}{M}} \!+\! 2 \mu_{\max} \label{eq:ClusCond}
\end{align}
guarantees that $G$ satisfies the NFC property with probability at least $1- 6N/M^2$.
\end{thm}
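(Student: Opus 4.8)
\emph{Proof strategy.} The plan is to exhibit an event of probability at least $1-6N/M^2$ on which the estimated distances \eqref{eq:distest} separate within-model pairs from across-model pairs strongly enough that, for every observation $\xh_i$, the set $\mc T_i$ of its $q$ nearest neighbours is contained in the cluster to which $\xh_i$ belongs. Since the adjacency matrix is $\mZ+\transp{\mZ}$, an edge between nodes $i$ and $j$ exists only if $i\in\mc T_j$ or $j\in\mc T_i$, so this containment immediately yields the NFC property. As $q\le\min_{\l}(n_\l-1)$, every $\xh_i\in\cX_\l$ has at least $q$ within-model competitors, and hence it suffices to establish
\[
\max_{\xh_j\in\cX_\l,\ j\neq i} d(\xh_i,\xh_j)\;<\;\min_{\xh_v\notin\cX_\l} d(\xh_i,\xh_v),\qquad \l\in[L],\ \xh_i\in\cX_\l .
\]

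First I would compute the mean of the BT estimate. Inserting the bias-corrected window $\gh[m]=g[m]/\errwin[m]$ into \eqref{eq:BlackmanTukey} and using $\E{\rh_i[m]}=\tfrac{M-|m|}{M}\rlc[m]=\tfrac{M-|m|}{M}\errwin[m]\rlt[m]$ together with $\rlt[m]=\rl[m]+\sigma^2\delta[m]$ and $g[0]=1$, a short calculation gives, for $\xh_i\in\cX_\l$,
\[
\E{\sh_i(f)}=\sl(f)-\sum_{m}h[m]\,\rl[m]\,e^{-\im 2\pi f m}+\sigma^2 ,
\]
with $h$ as in \eqref{eq:weightedwindow}; the bias term has magnitude at most $\mul\le\mu_{\max}$ for all $f$, and the additive constant $\sigma^2$ is harmless since $d(\Xk,\Xl)$ also equals the $L^1$-distance between the noisy PSDs $\sk+\sigma^2$ and $\sl+\sigma^2$. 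Combining this with the reverse triangle inequality for $\norm[1]{\cdot}$ yields the deterministic bound
\[
\abs{d(\xh_i,\xh_j)-d(\Xk,\Xl)}\;\le\;\mu_{\max}+\tfrac12\norm[1]{\sh_i-\E{\sh_i}}+\tfrac12\norm[1]{\sh_j-\E{\sh_j}} ,
\]
valid for all $i\in\cX_k$, $j\in\cX_\l$ (including $k=\l$, where $d(\Xk,\Xl)=0$). Inserting this into the comparison above, the required separation follows from
\[
\min_{k\neq\l}d(\Xk,\Xl)\;>\;2\mu_{\max}+2\max_{i\in[N]}\norm[1]{\sh_i-\E{\sh_i}} ,
\]
so, by \eqref{eq:ClusCond}, the whole problem reduces to showing $\max_{i\in[N]}\norm[1]{\sh_i-\E{\sh_i}}\le \tfrac{4\sqrt2\,A\,(B+\sigma^2+\sqrt2(1+\perr)(1+\sigma^2))}{\perr^2}\sqrt{\tfrac{\log M}{M}}$ with probability at least $1-6N/M^2$.

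The concentration bound is the heart of the proof. For fixed $f$, $\sh_i(f)$ is the real quadratic form $\tfrac1M\transp{\vx_i}\mQ_f\vx_i$ in the length-$M$ observation $\vx_i$, with Hermitian $\mQ_f$ given by $(\mQ_f)_{n,n'}=\gh[n-n']e^{-\im 2\pi f(n-n')}$; from $0\le g(f)\le A$ and $g[0]=1$ one bounds $\norm[2 \to 2]{\mQ_f}$ and $\norm[F]{\mQ_f}^2$ by quantities of order $A/\perr^2$ and $MA/\perr^4$. Conditioning on the Bernoulli mask $\vb\in\{0,1\}^M$ of the segment, $\vx_i$ is zero-mean Gaussian with covariance $\mP_\vb(\Sigma_\l+\sigma^2\mI)\mP_\vb$, where $\Sigma_\l$ is the Toeplitz covariance of $\Xl$ with $\norm[2 \to 2]{\Sigma_\l}\le B$. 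A Hanson--Wright-type deviation inequality for Gaussian quadratic forms (its variance proxy obtained from the Isserlis/Wick formula and the norm bounds on $\mQ_f$ and $\Sigma_\l+\sigma^2\mI$) then controls $\abs{\sh_i(f)-\E{\sh_i(f)\mid\vb}}$ at scale $\perr^{-2}\sqrt{(B+\sigma^2)A/M}$; a separate, lower-order estimate handles the mask-induced fluctuation $\E{\sh_i(f)\mid\vb}-\E{\sh_i(f)}$, a low-degree polynomial in the Bernoulli variables with coefficients bounded via $\abs{\rlt[m]}\le 1+\sigma^2$, which accounts for the $(1+\perr)(1+\sigma^2)$ term. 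Finally, $\sh_i-\E{\sh_i}$ is a trigonometric polynomial of degree $M-1$ and hence Lipschitz in $f$ with constant of order $M\norm[L^\infty]{\sh_i-\E{\sh_i}}$; evaluating the pointwise tail bounds on a grid of order $M^2$ frequencies and union-bounding over the grid turns them into a bound on $\norm[L^\infty]{\sh_i-\E{\sh_i}}\ge\norm[1]{\sh_i-\E{\sh_i}}$, at the cost of the $\sqrt{\log M}$ factor and a per-observation failure probability of order $M^{-2}$. A final union bound over $i\in[N]$ produces the $6N/M^2$ in the statement; positivity of the BT estimate is never used, consistent with $\gh$ not having a non-negative DTFT.

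The main obstacle is this last step: obtaining the uniform-in-$f$, $L^1$ (equivalently $L^\infty$) deviation bound for the BT estimate with the precise dependence on $M$, $\perr$, $B$, $\sigma^2$, and the window constant $A$ — in particular, correctly combining the Gaussian quadratic-form fluctuation with the fluctuation caused by the random missing-entry pattern, and verifying that discretising $[0,1)$ loses only the stated $\sqrt{\log M}$ factor. The remaining ingredients — the reduction to pairwise separation, the evaluation of $\E{\sh_i(f)}$, and the triangle-inequality bookkeeping — are routine.
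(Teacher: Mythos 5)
Your proposal follows essentially the same route as the paper's: reduce the NFC property to a strict separation of within-model from across-model estimated distances (the paper's Theorem~\ref{lem:DistCond}), isolate a deterministic bias term bounded by $\mu_{\max}$ via triangle-inequality bookkeeping, and split the stochastic PSD-estimation error into a Gaussian quadratic-form fluctuation conditional on the mask plus a mask-induced fluctuation, each handled by a concentration inequality and combined with union bounds. Two points of divergence are worth recording. First, the paper does not discretize $[0,1)$: it applies the Gaussian quadratic-form concentration with the matrix-norm bounds $\norm[2 \to 2]{\smash{\mGh(f)}}\le A/p^2$ and $\norm[2 \to 2]{\mR}\le B$, which hold uniformly in $f$, and the $\sqrt{\log M}$ comes from tuning the deviation level so that each event fails with probability $O(M^{-2})$; your grid-plus-Lipschitz argument is a defensible substitute (arguably more careful about the supremum over $f$) and would only change constants. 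Second --- and this is where your sketch conceals the one genuinely non-routine step --- the mask-induced term is \emph{not} lower-order: it enters at the same $\sqrt{\log M/M}$ scale and is precisely the source of the $(1+p)(1+\sigma^2)$ contribution. Controlling the quadratic form $\transp{\bxi}\mD\bxi$ in the Bernoulli vector $\bxi$ with a bound depending only on $\norm[2 \to 2]{\mD}$ (so that the window enters through $A/p^2$ rather than through entrywise or row-sum quantities) is not available off the shelf; the paper proves a dedicated decoupling-plus-Hoeffding concentration inequality for this purpose (Lemma~\ref{le:boolquadform}). Your plan goes through once you supply or cite such a spectral-norm-based Bernoulli concentration result; as written, ``coefficients bounded via $\abs{\rlt[m]}\le 1+\sigma^2$'' points toward an entrywise bound that would not, in general, reproduce the stated dependence on $A$.
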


The condition $q \leq \min_{\l \in [L]} (n_\l - 1)$ is necessary for the NFC property to hold as choosing $q > \min_{\l \in [L]} (n_\l - 1)$ would force NNPC to select observations from $\cX \backslash \cX_\l$ for at least one of the data points $\xlh_i$. As the $n_\l$ are unknown in practice, one has to guess $q$ while taking into account the tradeoffs related to the choice of $q$ as discussed in Sec. \ref{sec:ProbAlgo}.

Our main result for KM comes with a performance guarantee that is stronger than the NFC property, namely it ensures correct clustering; accordingly, ``KM succeeds'' henceforth refers to KM delivering correct clustering. This stronger result is possible as KM does not entail a spectral clustering step and is therefore much easier to analyze. On the other hand, NNPC typically outperforms KM in practice, as seen in the numerical results in Sec.~\ref{sec:numres}.

\begin{thm} \label{thm:KM}
Let $\cX$ be generated according to the statistical data model described above. Then, under the clustering condition \eqref{eq:ClusCond}, the partition $\hat \cX_1,\dots, \hat \cX_L$ of $\cX$ inferred by KM corresponds to the true partition $\cX_1,\dots,\cX_L$ with probability at least $1- 6N/M^2$.
\end{thm}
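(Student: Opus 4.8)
The plan is to reduce Theorem~\ref{thm:KM} to a deterministic ``distance separation'' property that holds on a high‑probability event $\mc E$, and then to check that on $\mc E$ the farthest‑point initialization and nearest‑center assignment of KM recover the true partition. The probabilistic content is the same one that drives Theorem~\ref{thm:NNPC}---both algorithms act on the data only through the estimated distances $d(\xh_i,\xh_j)$---which is why the two results share the confidence $1-6N/M^2$.

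First I would establish that there is an event $\mc E$ with $\PR{\mc E}\ge 1-6N/M^2$ on which
\[
\abs{d(\xh_i,\xh_j)-d(\Xk,\Xl)}\le\tfrac12\rho\qquad\text{for all }\xh_i\in\cX_k,\ \xh_j\in\cX_\l,\ k,\l\in[L],
\]
where $\rho$ denotes the right-hand side of \eqref{eq:ClusCond}. By the reverse triangle inequality in $L^1([0,1))$, $\abs{d(\xh_i,\xh_j)-d(\Xk,\Xl)}\le\tfrac12\norm[L^1]{(\sh_i-\E{\sh_i})-(\sh_j-\E{\sh_j})}+\tfrac12\norm[L^1]{(\E{\sh_i}-\sk)-(\E{\sh_j}-\sl)}$. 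A direct computation of the mean of the masked, noisy, bias-corrected BT estimator shows that, for $\xh_i\in\cX_k$, $\E{\sh_i}$ is a windowed version of $\sk+\sigma^2$ (with effective window $g[m](1-|m|/M)$); hence in the second term the flat noise offset $\sigma^2$ cancels and, by \eqref{eq:weightedwindow}, that term is at most $\tfrac12(\powpar{\mu}{k}+\powpar{\mu}{\l})\le\mu_{\max}$. For the first (fluctuation) term I would bound $\tfrac12\norm[L^1]{\sh_i-\E{\sh_i}}$ per observation: writing $\sh_i-\E{\sh_i}$ as the trigonometric polynomial with coefficients $\gh[m]\,(\rh_i[m]-\E{\rh_i[m]})$, where $\abs{\gh[m]}\le\perr^{-2}$, reduces the problem to controlling the sample-autocorrelation fluctuations $\rh_i[m]-\E{\rh_i[m]}$ of $\Xlc$ uniformly over the $2M-1$ lags. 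Conditioning on the Bernoulli mask $\El$---which renders $\Xlc$ Gaussian---and applying concentration of Gaussian quadratic forms, with variance estimates that bring in $B$, $\sigma^2$, the mixed Bernoulli moments (the factor $1+\perr$) and the window energy $\int_0^1 g(f)^2\,\d f\le A$, a union bound over lags and observations then yields $\tfrac12\norm[L^1]{\sh_i-\E{\sh_i}}\le\frac{2\sqrt2\,A(B+\sigma^2+\sqrt2(1+\perr)(1+\sigma^2))}{\perr^2}\sqrt{\tfrac{\log M}{M}}$ on $\mc E$, so the two fluctuation terms sum to $\tfrac12\rho-\mu_{\max}$ and the displayed bound follows.

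On $\mc E$, since $d(\Xl,\Xl)=0$, any two observations of the same generative model are at estimated distance at most $\tfrac12\rho$, whereas by \eqref{eq:ClusCond} any two observations of distinct models are at estimated distance at least $\min_{k\neq\l}d(\Xk,\Xl)-\tfrac12\rho>\tfrac12\rho$; thus every within-model estimated distance is strictly smaller than every between-model one. I would then prove by induction on $p$ that after the $p$-th iteration of Step~3 the chosen centers $\xh_{c_1},\dots,\xh_{c_p}$ lie in $p$ pairwise distinct generative models: the case $p=1$ is trivial, and in the inductive step, if $\xh_{c_1},\dots,\xh_{c_{p-1}}$ span $p-1$ distinct models, then $\min_{\l\in[p-1]}d(\xh_i,\xh_{c_\l})\le\tfrac12\rho$ for every $\xh_i$ whose model is already represented and $>\tfrac12\rho$ for every $\xh_i$ whose model is not (the current centers then lying in models different from that of $\xh_i$); since $p-1<L$ and each $\cX_\l$ is non-empty, the $\arg\max$ in Step~3 is attained at an observation from an unrepresented model, so $\xh_{c_p}$ lies in a new model. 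Hence after Step~3 there is exactly one center per generative model, and in Step~4 each $\xh_i$, of model $\l$ say, has estimated distance at most $\tfrac12\rho$ to the unique center in $\cX_\l$ and more than $\tfrac12\rho$ to every other center, so it is assigned to the center of its own model. Therefore, on $\mc E$, the partition $\hat\cX_1,\dots,\hat\cX_L$ coincides with $\cX_1,\dots,\cX_L$ up to relabeling, and since $\PR{\mc E}\ge 1-6N/M^2$ the theorem follows.

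The step I expect to be the main obstacle is the uniform deviation bound underlying $\mc E$: carrying through the concentration of the sample autocorrelations of the \emph{masked} process $\Xlc$ while simultaneously tracking the Bernoulli masking (the $\perr^{-2}$ bias-correction inflation and the mixed-moment factor $1+\perr$), the additive white noise $\sigma^2$, and the spectral window (the constants $A$ and $B$), so as to land precisely on the right-hand side of \eqref{eq:ClusCond}, is the only delicate part; once $\mc E$ is available the farthest-point/assignment argument above is elementary, and this concentration estimate is also exactly the ingredient needed for Theorem~\ref{thm:NNPC}.
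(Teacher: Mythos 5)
Your overall architecture is exactly the paper's: reduce the theorem to the strict separation property \eqref{eq:DistRelation} (the paper's Theorem~\ref{lem:DistCond}) holding on an event of probability $1-6N/M^2$, and then run the same induction on the farthest-point initialization followed by the nearest-center assignment. That combinatorial half of your argument is correct and essentially verbatim the paper's proof, and your accounting of the deterministic bias (the cancellation of the flat $\sigma^2$ offset and the bound by $\mu_{\max}$ via \eqref{eq:weightedwindow}) is sound and, if anything, cleaner than the paper's.

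The gap is in the concentration step you yourself flag as delicate, and it is more than a bookkeeping issue. First, your decomposition targets $\sh_i-\E{\sh_i}$ (fluctuation about the \emph{unconditional} mean), but the tool you invoke---Gaussian quadratic-form concentration after conditioning on the Bernoulli mask---only controls $\sh_i-\EX{\vy}{\sh_i\,\vert\,\bxi}$. The remaining piece, the fluctuation of the conditional mean over the mask, is a quadratic form $\transp{\bxi}\mD\bxi-\E{\transp{\bxi}\mD\bxi}$ in the Bernoulli vector $\bxi$; this is where the $(1+\perr)$ factor in \eqref{eq:ClusCond} actually comes from, and the paper has to prove a dedicated concentration inequality for it (Lemma~\ref{le:boolquadform}, via decoupling and Hoeffding) because off-the-shelf bounds are not in a usable form. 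Your sketch folds this into ``variance estimates'' inside the Gaussian step, so a genuinely needed ingredient is missing. Second, your route to uniformity---a union bound over the $2M-1$ lags followed by summing $\abss{\gh[m]}$ times per-lag deviations---controls the fluctuation by $(\sum_m\abss{g[m]})/\perr^2$ times the per-lag rate, whereas \eqref{eq:ClusCond} features $A=\sup_f g(f)$. These coincide (up to the extra $\log$ cost of the lag union bound) only when the $g[m]$ are non-negative; the paper's assumption is non-negativity of the DTFT, not of the lag-domain window, and the paper avoids this by applying the quadratic-form bound per frequency to the full windowed matrix $\mGh(f)$, whose operator norm is controlled by $A/\perr^2$. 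As sketched, your argument therefore does not land on the right-hand side of \eqref{eq:ClusCond}, so the theorem with the stated constant does not follow without reworking this step along the paper's lines.
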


The proofs of Theorems \ref{thm:NNPC} and \ref{thm:KM} are provided in Appendix \ref{sec:ProofMain}. We first note that the clustering condition \eqref{eq:ClusCond} depends on a few model 
parameters only and all constants involved are explicit. Furthermore, the condition is identical for NNPC and KM, although the performance guarantee we obtain for KM (namely correct clustering) is stronger than that for NNPC (namely the NFC property). This is a consequence of both proofs relying on the same ``separation condition'' (namely \eqref{eq:DistRelation} in Appendix \ref{sec:ProofMain}) 
and the clustering condition \eqref{eq:ClusCond} being sufficient for this separation condition to hold (see Appendix \ref{sec:ProofMain} for further details).

Theorems \ref{thm:NNPC} and \ref{thm:KM} essentially state that NNPC and KM succeed even when the PSDs $\sl$ of the $\Xl$ overlap significantly and the observations have missing entries and are contaminated by strong noise, all this provided that the observation length $M$ is sufficiently large and the window function $g$ is chosen  
to guarantee small $\mu_{\max}$. The clustering condition \eqref{eq:ClusCond} suggests (recall that it is sufficient only) a tradeoff between the amount of overlap of pairs of PSDs $\{\sk,\sl\}$ (through $\min_{k, \l \in [L] \colon k \neq \l} d(\Xk,\Xl)$), the observation length $M$, the sampling probability $\perr$, and the noise variance $\sigma^2$. 
It indicates, for example, that both algorithms tolerate shorter observation length $M$, more missing entries (i.e., smaller $p$), and stronger noise (i.e., larger $\sigma^2$) as the 
pairs $\{ \sk, \sl \}$, $k \neq \l$, overlap less and hence $\min_{k, \l \in [L] \colon k \neq \l} d(\Xk,\Xl)$ is larger. 
Keeping $\sigma^2$ and $p$ fixed, the first term on the RHS of \eqref{eq:ClusCond}, which accounts for the PSD estimation error owing to finite observation length $M$, vanishes as $M$ becomes large. 
Since $d(\Xk, \Xl) \in [0,1]$, we need $\mu_{\max} \ll 1$ to ensure that the clustering condition can be satisfied for finite $M$. To see how this can be accomplished, we consider $\rl$ of small effective support relative to $M$, i.e., $\rl[m] \approx 0$ for $m \geq M_0$ with $M_0 \ll M$, which is essentially equivalent to requiring that the $\sl$ be sufficiently smooth. We then choose $g$ such that $g[m] \approx 1$ for $m \leq M_0$ and note that this ensures $h[m] \approx 1-g[m] (1 - \vert m \vert /M) \approx 1-g[m] \approx 0$, for $m \leq M_0 \ll M$. Thanks to $\mul = \sum_{m=-\infty}^\infty \abss{h[m]} \abss{\rl[m]} \approx \sum_{m=-M_0}^{M_0} \abss{h[m]} \abss{\rl[m]} \ll 1$, we then get $\mu_{\max} \ll 1$.
The clustering condition \eqref{eq:ClusCond} can hence, indeed, be satisfied for finite $M$ if the $\rl$ have small effective support. 
Note that the choice of $g$ will affect the constant $A$ (recall that $0 \leq g(f) \leq A < \infty$, $f \in [0,1)$). Specifically, windows $g$ of larger effective support have larger corresponding $A$ in general.

To ensure high probability of success, we need to take $M \gg \sqrt{N}$, i.e., the observation length has to be large relative to the square root of the number of observations. 
We note that the results in Theorems \ref{thm:NNPC} and \ref{thm:KM} can easily be extended to colored noise processes, as long as the noise PSDs are identical for all $\l \in [L]$. 

We emphasize that the vast majority of analytical performance results for random process clustering available in the literature pertain to the asymptotic regime $M \to \infty$, with $N$ fixed. The findings in \cite{kakizawa1998discrimination, vilar2004discriminant} are closest in spirit to ours and show that pairs of observations stemming from different generative models can be discriminated consistently (in the statistical sense), for $M \to \infty$, via a PSD-based distance measure, provided that the PSDs of all pairs of generative models differ on a set of positive Lebesgue measure. 

Finally, we note that generalization of our analytical results to processes other than Gaussian such as, e.g., subgaussian processes, seems difficult as a version of the concentration inequality \cite[Lem. 1]{demanet2012matrix}, upon which the proofs of Theorems \ref{thm:NNPC} and \ref{thm:KM} rely, does not appear to be available for non-Gaussian random vectors with dependent entries (see \cite{adamczak2015note} for details). For i.i.d. subgaussian processes such an inequality was reported in \cite{hansonwright2013rudelson}; this is, however, not of interest here as i.i.d. processes have flat PSDs.

\section{Comparison with thresholding-based subspace clustering} 
For finite observation length $M$, the random process clustering problem considered here can also be cast as a classical subspace clustering problem 
simply by interpreting the observations $\xlh_i$ as vectors $\vxl_i \in \reals^M$. 
Numerical results, not reported here\footnote{but available at \url{http://www.nari.ee.ethz.ch/commth/research/}}, 
demonstrate, however, that this approach leads to NNPC significantly outperforming its subspace clustering cousin, 
the TSC algorithm \cite{heckel2014robust}. Our next result, Proposition \ref{pr:PropInnerProd} below, provides analytical underpinning for this observation. Before stating the formal result, we develop some intuition. 
To this end, we consider statistically independent observations and set $p=1$ (i.e., no missing entries). We then note that the clustering condition \eqref{eq:ClusCond} for NNPC ensures that (using \eqref{eq:DistUB} and \eqref{eq:DistLB} in \eqref{eq:DistRelation} together with \eqref{eq:PrFaUB} and \eqref{eq:PrFbUB}, cf. Appendices~\ref{sec:ProofMain} and~\ref{sec:PfDistCond})
\begin{equation}
\PR{d(\xkh_j, \xlh_i) \leq d(\xlh_v, \xlh_i)} < \frac{6}{M^2}, \label{eq:DissimProb}
\end{equation}
for $i \neq v$, all $j$, and $k \neq \l$. 
This guarantees that the probability of the NFC property being violated becomes small for $M$ large, in particular, even when the PSD pairs $\{ \sk, \sl \}$, $k \neq \l$, overlap substantially and $\mathrm{SNR} \defeq \rl[0]/\sigma^2 = 1/\sigma^2 < 1$, $\l \in [L]$. 
For TSC (which 
constructs the sets $\mc T_i$ such that $\abs{\innerprod{\vx_j}{\vx_i}} \geq \abs{\innerprod{\vx_v}{\vx_i}}$ for all $j \in \mc T_i$ and all $v \notin \mc T_i$) applied to $\{\vxl_i\}_{i \in n_\l, \l \in [L]}\vspace{0.9mm}$ the probability corresponding to the LHS of \eqref{eq:DissimProb} is $\PR{\smash{\abs{\innerprod{\smash{\vxk_j}}{\smash{\vxl_i}}} \geq \abs{\innerprod{\smash{\vxl_v}}{\smash{\vxl_i}}}}}$. The next proposition establishes that 
this probability remains bounded away from $0$ even when $M$ grows large, unless the observations are noiseless and all the PSD pairs $\{\sk, \sl \}$, $k \neq \l$, are supported on essentially disjoint frequency bands. 
These conditions are, however, hardly encountered in practice, and the corresponding clustering problem can be considered easy. The superior performance of NNPC as compared to TSC stems from the TSC similarity measure not exploiting the stationarity of the generative models. 
We proceed to the formal statement. 

\begin{proposition} \label{pr:PropInnerProd} 
Let $\xlh_i$ be a contiguous length-$M$ observation of $\Xlt$ (note that we consider the case $p=1$). Assume that the $\xlh_i$ are independent across $\l \in [L]$ and $i \in [n_\l]$. Denote the vectors containing the elements of the $\xlh_i$ by $\vxl_i \in \reals^M$ and the corresponding covariance matrices by $\Rlt \defeq \Rl + \sigma^2 \mI$, with $\Rl_{v,w} = \rl[w-v] = \rl[v-w]$, $\l \in [L]$. Then, for $k \neq \l$ and $v \neq i$, we have
\begin{align}
&\PR{\abs{\innerprod{\vxk_j}{\vxl_i}} \geq \abs{\innerprod{\vxl_v}{\vxl_i}}} \nonumber \\
& \qquad \qquad \qquad
\geq \frac{1}{5 \pi} \atan \! \left( \frac{\sqrt{\tr \! \left(\Rkt \Rlt \right)}}{5 \sqrt{3} \sqrt{\tr \! \left(\Rlt \Rlt \right)}} \right).\label{eq:PropInnerProd}
\end{align}
\end{proposition}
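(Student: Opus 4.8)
The plan is to condition on $\vxl_i$, to turn the resulting conditional probability into a statement about a Cauchy random variable, and then to average over $\vxl_i$ by controlling a single favorable event.

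First fix a realization $\vxl_i = \vx$. By the assumed independence of $\vxk_j$, $\vxl_i$, $\vxl_v$, conditionally on $\vx$ the two inner products $\innerprod{\vxk_j}{\vx}$ and $\innerprod{\vxl_v}{\vx}$ are independent zero-mean Gaussians with variances $\vx^\top \Rkt \vx$ and $\vx^\top \Rlt \vx$, respectively. For independent zero-mean Gaussians $A,B$ the ratio $A^2/B^2$ has the law of the square of a standard Cauchy random variable, which gives
\[
\PR{\abs{\innerprod{\vxk_j}{\vx}} \geq \abs{\innerprod{\vxl_v}{\vx}} \mid \vxl_i = \vx} = \frac{2}{\pi}\,\atan\!\left( \sqrt{\frac{\vx^\top \Rkt \vx}{\vx^\top \Rlt \vx}}\, \right).
\]
Averaging over $\vx \sim \GN{\vect 0}{\Rlt}$ and using that $\atan$ is nonnegative and nondecreasing on $[0,\infty)$, the probability in \eqref{eq:PropInnerProd} is at least $\frac{2}{\pi}\atan(\tau)\,\PR{\mc E}$, where I take $\tau^2 \defeq \frac{1}{75}\,\tr(\Rkt\Rlt)/\tr(\Rlt\Rlt)$ and $\mc E \defeq \{\vx^\top \Rkt \vx \geq \tau^2\, \vx^\top \Rlt \vx\}$. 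Since this $\tau$ is exactly the argument of $\atan$ on the right-hand side of \eqref{eq:PropInnerProd}, it remains to establish $\PR{\mc E} \geq 1/10$.

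To this end set $U \defeq \vx^\top \Rkt \vx$ and $V \defeq \vx^\top \Rlt \vx$, so that $\E{U} = \tr(\Rkt \Rlt)$ and $\E{V} = \tr(\Rlt \Rlt)$, and note that on the event $\{U > \E{U}/3\} \cap \{V \leq 25\,\E{V}\}$ one has $\tau^2 V \leq \E{U}/3 < U$, so this event is contained in $\mc E$. Markov's inequality gives $\PR{V > 25\,\E{V}} \leq 1/25$. For the lower tail of $U$ I would whiten, writing $\vx = \Rlt^{1/2}\vg$ with $\vg \sim \GN{\vect 0}{\mI}$ and $U = \vg^\top \mM \vg$ for $\mM \defeq \Rlt^{1/2}\Rkt\Rlt^{1/2} \succeq \vect 0$; then $\E{U^2} = (\tr \mM)^2 + 2\tr(\mM^2) \leq 3(\tr \mM)^2$, because the eigenvalues of $\mM$ are nonnegative so $\tr(\mM^2) \leq (\tr \mM)^2$, while $\tr \mM = \tr(\Rkt \Rlt) = \E{U}$. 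The Paley--Zygmund inequality then yields $\PR{U > \E{U}/3} \geq (2/3)^2 (\E{U})^2/\E{U^2} \geq 4/27$, and a union bound gives $\PR{\mc E} \geq 4/27 - 1/25 = 73/675 > 1/10$, as desired.

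The main obstacle is this last step. Because $U$ and $V$ are quadratic forms in the \emph{same} colored Gaussian vector $\vx$, there is no independence to exploit, and the crude combination of a Paley--Zygmund lower-tail bound for $U$ with a Markov upper-tail bound for $V$ clears the threshold $1/10$ only barely; hence the numerical constants (the factor $1/3$ in the Paley--Zygmund event, the factor $25$ in the Markov step, and consequently the $5\sqrt{3}$ and the $1/(5\pi)$ appearing in \eqref{eq:PropInnerProd}) must be chosen with some care. The remaining ingredients---the Cauchy identity for the conditional probability and the bound $\tr(\mM^2) \leq (\tr \mM)^2$---are routine.
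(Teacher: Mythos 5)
Your proof is correct and its skeleton is identical to the paper's: condition on $\vxl_i$, observe that the two inner products are conditionally independent zero-mean Gaussians with variances $\transp{\vx}\Rkt\vx$ and $\transp{\vx}\Rlt\vx$, obtain the conditional probability $\tfrac{2}{\pi}\atan(\sigkl/\sigll)$ (the paper integrates the half-normal densities directly where you invoke the Cauchy law of a Gaussian ratio---same identity), truncate the expectation at the threshold $\beta = \sqrt{\tr(\Rkt\Rlt)}/(5\sqrt{3}\sqrt{\tr(\Rlt\Rlt)})$, and show the threshold event has probability at least $1/10$. The only genuine divergence is in that last step: the paper bounds $\PR{\sigkl^2 \le \tfrac{1}{3}\tr(\Rkt\Rlt)}$ by $e^{-1/9}$ via the Laurent--Massart lower tail for weighted $\chi^2$ sums and $\PR{\sigll > 5\sqrt{\tr(\Rlt\Rlt)}}$ by $e^{-8}$ via Gaussian concentration for Lipschitz functions, clearing $1/10$ with $1 - e^{-1/9} - e^{-8} \approx 0.105$; you instead use Paley--Zygmund (with the observation $\tr(\mM^2)\le(\tr\mM)^2$ for $\mM\succeq 0$) and Markov on the \emph{same two events}, clearing $1/10$ with $4/27 - 1/25 \approx 0.108$. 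Your version is more elementary---it needs only second moments rather than exponential concentration---and, amusingly, gives a marginally better constant for this particular threshold; the paper's concentration-based route would scale better if one wanted sharper or dimension-dependent constants, but for the fixed numerical target here the two are interchangeable.
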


Proof: See Appendix \ref{sec:TSCpropproof}.

\begin{remark}
Note that, in contrast to Theorems \ref{thm:NNPC} and \ref{thm:KM}, Proposition \ref{pr:PropInnerProd} assumes the observations to be statistically independent. This assumption turns out to be critical in the proof of Proposition~\ref{pr:PropInnerProd}.
\end{remark}

We next show, as announced, that the RHS of \eqref{eq:PropInnerProd} remains strictly positive even when $M$ grows large, unless the observations are noiseless and all pairs of PSDs have essentially disjoint support. 
To this end, we examine the 
behavior of $(1/M)\tr (\Rkt \Rlt)$, $k \neq \l$, and $(1/M)\tr (\Rlt \Rlt)$ 
(the motivation for the normalization by $M$ will become clear later). First, note that $(1/M)\tr (\Rlt \Rlt) \allowbreak < \infty$ as $\sum_{m = - \infty}^\infty \left({\tilde r}^{(\l)} [m]\right)^2< \infty$ by virtue of $\rlt = \rl[m] + \sigma^2 \delta[m] \in \ell_1$, which, in turn, follows from the assumption $\rl \in \ell_1$. The probability in \eqref{eq:PropInnerProd} is hence bounded away from $0$ 
unless $(1/M)\tr (\Rkt \Rlt)\allowbreak\approx 0$. It therefore remains to identify conditions for $(1/M)\tr (\Rkt \Rlt) \approx 0$ to hold. To this end, we 
note that
\begin{align}
\frac{1}{M} \tr \! \left( \Rkt \Rlt \right) &= \frac{1}{M} \sum_{m=0}^{M-1} \sum_{n =0}^{M-1} \rkt[n-m] \rlt[n-m] \nonumber \\
&=\sum_{m \in \mc M} \left(1 - \frac{\vert m \vert}{M} \right) \rkt[m] \rlt[m] \label{eq:toepwin} \\ 
&= \int_0^1 (w \ast \skt)(f) \slt(f) \d f \label{eq:AutocorrParseval},
\end{align}
where \eqref{eq:toepwin} is due to the Toeplitz structure and the symmetry of $\Rkt$ and $\Rlt$, \eqref{eq:AutocorrParseval} is by Parseval's Theorem, $\mc M \defeq \{ -M +1, -M + 2, \dots, M - 1\}$, and $w(f) \defeq \sum_{m \in \mc M} (1 - \vert m \vert / M) e^{-\im 2 \pi f m} = \sin^2(\pi f M)/(M \sin^2(\pi f))$. As $w(f)$ is strictly positive on the interval $[0,1)$ (apart from its zeros which are supported on a set of measure $0$) and the $\slt(f)$, $\l \in [L]$, are non-negative, 
\eqref{eq:AutocorrParseval} is bounded away from $0$ for finite $M$. As $M$ grows large, $w(f)$ approaches the Dirac delta distribution, i.e., the ``leakage'' induced by $w$ becomes small and we have \eqref{eq:AutocorrParseval}$\;\approx\int_0^1 \skt(f) \slt(f) \d f$. This integral vanishes for all $k \neq \l$ if and only if $\sigma^2 = 0$ (recall that $\slt(f) = \sl(f) + \sigma^2$, $\l \in [L]$) and all pairs $\{\sk, \sl \}$, $k \neq \l$, are supported on essentially disjoint frequency bands. This establishes the claim made above and concludes the argument.

\urldef{\code}\url{http://www.nari.ee.ethz.ch/commth/research/}
\vspace{-0.1cm}
\section[Numerical results]{Numerical results\footnote{\label{note1}Matlab code available at {\code}}}
\label{sec:numres}

We present numerical results for NNPC and KM on synthetic and on real data. 
In addition, we report results for KM followed by 100 $k$-means iterations (see the discussion in Sec. \ref{sec:ProbAlgo}); this variant of KM will be referred to as iterated $k$-means (KMit).
Furthermore, we compare NNPC, KM, and KMit with single linkage (SL), average linkage (AL), and complete linkage (CL) hierarchical clustering \cite[Sec. 14.3.12]{friedman2009elements}, all based on the $L^1$-distance measure \eqref{eq:distest}. 
We also investigate variants of NNPC, KM, and KMit with the $L^1$-distance measure replaced by $d_{L^2}(\xh_i, \xh_j) \defeq (\int_0^1 \abs{\sh_i(f) - \sh_j(f)}^2 \d f)^\frac{1}{2}$, and variants of NNPC and KM with the $L^1$-distance measure replaced by $d_{L^\infty}(\xh_i, \xh_j) \defeq \sup_{f\in[0,1)} \abs{\sh_i(f) - \sh_j(f)}$ (we do not consider KMit here as 
$d_{L^\infty}$-based $k$-means iterations do not seem sensible). NNPC and KM were implemented strictly according to the corresponding algorithm descriptions in Sec.~\ref{sec:ProbAlgo}. For SL, AL, and CL, we use the functions built into Matlab. 
Throughout, performance is measured in terms of the clustering error (CE), i.e., the fraction of misclustered data points, defined as
\begin{equation}
\text{CE}(\hat \vc, \vc) = \min_\pi \left( 1 - \frac{1}{N} \sum_{i=1}^N 1_{\{\pi(\hat c_i) = \pi(c_i)\}} \right), \nonumber
\end{equation}
where $\vc \in [L]^N$ and $\hat \vc \in [L]^N$ are the true and the estimated assignments, respectively, and the minimum is taken over all permutations $\pi \colon [L] \to [L]$. 
We report running times (excluding time for loading the data) obtained on a MacBook Pro with a 2.5 GHz Intel Core i7 CPU with 16 GB RAM.

\subsection{Synthetic data} \label{sec:numsynthdata}

We investigate the tradeoff between the minimum distance $\min_{k, \l \in [L] \colon k \neq \l} d(\Xk,\Xl)$, the observation length $M$, the sampling probability $p$, and the noise variance $\sigma^2$ as indicated by the clustering condition \eqref{eq:ClusCond}. Recall that the clustering condition is only sufficient (and for NNPC guarantees the NFC property only). It is therefore unclear a priori to what extent the CE, indeed, follows the behavior indicated by the clustering condition. 

We consider $L=2$ second-order AR generative processes with PSDs of the form 
\begin{equation} \label{eq:resonator}
s_{a, \nu} (f) = \frac{b^2(a, \nu) }{\abs{1 - 2 a \cos ( \nu ) e^{\im 2\pi f} + a^2 e^{\im 4\pi f} }^2},
\end{equation}
where $\nu \in [0, \pi]$, 
$a \in (0,1)$, and $b^2(a, \nu) = 1/(\int_0^1 1/ \abs{1 - 2 a \cos ( \nu ) e^{\im 2\pi f} + a^2 e^{\im 4\pi f} }^2 \d f)$ ensures that $\int_0^1 s_{a, \nu} (f) \d f = 1$. Fig. \ref{fig:psdexamples} shows examples of $s_{a, \nu}(f)$ for different choices of $a$ and $\nu$. In the ensuing experiments, we set $\powpar{s}{1}(f) = s_{0.6,0.7\pi}(f)$ and $\powpar{s}{2}(f) = s_{0.6,\nu_2}(f)$, where $\nu_2$ is variable and controls the locations of the peaks of $\powpar{s}{2}$ and thereby the distance $d(\powpar{X}{1},\powpar{X}{2})$. Indeed, varying $\nu_2$ shifts the locations of the peaks of $\powpar{s}{2}$ while essentially maintaining its shape. 
For the BT PSD estimator, we use a Bartlett window of length $W$ defined as
\begin{equation}
g^B_W[m] \defeq \left\{ \begin{matrix*}[l] 1 - \vert m \vert / \lfloor W/2\rfloor, & \text{for} \; \vert m \vert \leq \lfloor W/2 \rfloor \label{eq:defbartlettwin} \\
0, & \text{otherwise,} \end{matrix*} \right.
\end{equation}
and we set $W = 101$. Note that $g^B_W$ satisfies the assumptions made about $g$ in Sec. \ref{sec:mainres}. The number of generative models $L=2$ is assumed known throughout. The performance of NNPC is found (corresponding results are not shown here) to be rather insensitive to the choice of the parameter $q$ as long as $10 \leq q \leq 25$; we set $q = 10$. For a given quadruple $(\nu_2,M,\sigma,p)$, a realization of the data set $\cX$ is obtained by sampling $n = 25$ independent observations from $\powpar{\check X}{1}$ and $\powpar{\check X}{2}$ each, and the CE is estimated by averaging over $10$ such independent realizations of $\cX$. We do not normalize the BT PSD estimates to unit power.

Fig. \ref{fig:phasediag} shows that NNPC, KM, and KMit all 
exhibit roughly the same qualitative behavior as a function of $d(\powpar{X}{1},\powpar{X}{2})$, $M$, $1/p$, and $\sigma$. In particular, for large enough $d(\powpar{X}{1},\powpar{X}{2})$ all three algorithms yield a CE close to $0$ even when $\sigma^2$ exceeds the signal power (i.e., when $\mathrm{SNR} < 1$), when the observations have missing entries ($p < 1$), and when $M$ is small. All three algorithms tolerate more noise and more missing entries as the observation length increases. These numerical results are in line with the \emph{qualitative} tradeoff indicated by the (sufficient) clustering condition \eqref{eq:ClusCond}. The numerical constants in (4) are, however, too big for the clustering condition (4) to be sharp. NNPC consistently achieves the lowest CE, followed by KMit, and KM. The performance advantage of NNPC over KM and KMit can be attributed to the spectral clustering step, which leads to increased robustness to noise and missing entries. Finally, we note that KMit often yields a significantly lower CE than KM.

The results in Fig. \ref{fig:phasediag-hierarchical} indicate that the \emph{qualitative} dependence of the CE on $d(\powpar{X}{1},\powpar{X}{2})$, $M$, $1/p$, and $\sigma$ for SL, AL, and CL is essentially identical to that for NNPC, KM, and KMit. For large $\sigma$ and small $d(\powpar{X}{1},\powpar{X}{2})$, $M$, or $p$, SL and AL lead, however, to a significantly larger CE than NNPC, KM, and KMit. The CE for CL is comparable to, but slightly larger than, that of KMit and significantly larger than that of NNPC. 

Comparing the CE for NNPC, KM, and KMit in Fig.~\ref{fig:phasediag} with that obtained for their $d_{L^2}$ and $d_{L^\infty}$-cousins in Figs.~ \ref{fig:phasediag-l2} and \ref{fig:phasediag-linf}, respectively, we note that, for all values of $d(\powpar{X}{1},\powpar{X}{2})$, $M$, $\sigma$, and $\perr$ the $d_{L^2}$-based variants of NNPC, KM, and KMit and the $d_{L^\infty}$-based variants of NNPC and KM yield the same or larger CE than the respective original variants. 
This justifies usage of the $L^1$-based distance measure \eqref{eq:distest} 
also from a practical point of view. 
Finally, we note that normalizing the model PSDs \eqref{eq:resonator} according to $(\int_0^1 s_{a, \nu}^2 (f) \d f)^\frac{1}{2} = 1$ for $d_{L^2}$ and $\sup_{f \in [0,1)} s_{a, \nu} (f) = 1$ for $d_{L^\infty}$ does not have a noticeable impact on clustering performance.

\begin{figure}[h!]
    \centering
    \begin{tikzpicture}[scale=0.8] 
    
    \begin{axis}[name=plot1,
        	xlabel={\footnotesize $f$},
    	width=0.9\columnwidth,
	height = 0.45\columnwidth,
	ymin = 0,
	ymax = 5,
	legend entries={ {$s_{0.6,0.4\pi}$}, {$s_{0.8,0.4\pi}$},{$s_{0.6,0.7\pi}$},{$s_{0.8,0.7\pi}$},},
	xticklabel style={font=\footnotesize,},
	yticklabel style={font=\footnotesize,},
    	legend style={
                    font=\tiny,}
         ]
	{
		\addplot +[mark=none,solid,black] table[x index=0,y index=1]{./data/phasediag/psds.dat};
		\addplot +[mark=none,dashdotted,black] table[x index=0,y index=2]{./data/phasediag/psds.dat};
		\addplot +[mark=none,dashed,black] table[x index=0,y index=3]{./data/phasediag/psds.dat};
		\addplot +[mark=none,dotted,black] table[x index=0,y index=4]{./data/phasediag/psds.dat};
		}
	\end{axis}
    \end{tikzpicture}
    \vspace{-0.2cm}
    \caption{\label{fig:psdexamples} Example PSDs of the form \eqref{eq:resonator}.}
\end{figure}
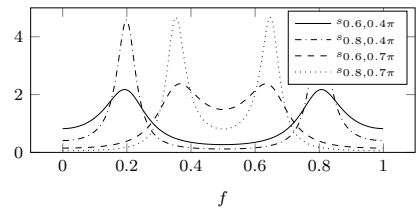

\newcommand{\labelsize}{\scriptsize}

\begin{figure}[h]
\centering
\begin{tikzpicture}[scale = 1]
\begin{groupplot}[group style={group size=3 by 4,horizontal sep=0.08\columnwidth,vertical sep=0.12\columnwidth,xlabels at=edge bottom, ylabels at=edge left},
x label style={at={(axis description cs:0.5,-0.35)},anchor=north},
width=0.34\columnwidth, /tikz/font=\footnotesize, colormap/blackwhite, view={0}{90}, point meta min=0.0, point meta max=0.5, minor tick num=4] 
\nextgroupplot[title = NNPC, xlabel={\labelsize $\sigma$}, ylabel={\labelsize $d(X^{(1)},X^{(2)})$}]
\addplot3[surf, shader=flat] file {./data/phasediag/cesnnpc-distsigman.dat};
\nextgroupplot[title = KM, xlabel={\labelsize $\sigma$}] 
\addplot3[surf, shader=flat] file {./data/phasediag/ceskm-distsigman.dat};
\nextgroupplot[title = KMit, colorbar, colorbar style={
at={(0.712\columnwidth, 0.115\columnwidth)},anchor=north west}, xlabel={\labelsize $\sigma$}] 
\addplot3[surf, shader=flat] file {./data/phasediag/ceskmfp-distsigman.dat};

\nextgroupplot[xlabel={\labelsize $M$}, ylabel={\labelsize $d(X^{(1)},X^{(2)})$}, scaled x ticks=base 10:-3, every x tick scale label/.style={at={(xticklabel* cs:1.15,0.25cm)}, anchor=near xticklabel}] 
\addplot3[surf, shader=flat] file {./data/phasediag/cesnnpc-distobslen.dat};
\nextgroupplot[xlabel={\labelsize $M$}, scaled x ticks=base 10:-3, every x tick scale label/.style={at={(xticklabel* cs:1.15,0.25cm)}, anchor=near xticklabel}] 
\addplot3[surf, shader=flat] file {./data/phasediag/ceskm-distobslen.dat};
\nextgroupplot[colorbar, xlabel={\labelsize $M$}, scaled x ticks=base 10:-3, every x tick scale label/.style={at={(xticklabel* cs:1.15,0.25cm)}, anchor=near xticklabel}] 
\addplot3[surf, shader=flat] file {./data/phasediag/ceskmfp-distobslen.dat};

\nextgroupplot[xlabel={\labelsize $M$}, ylabel={\labelsize $\sigma$}, scaled x ticks=base 10:-3, every x tick scale label/.style={at={(xticklabel* cs:1.15,0.25cm)}, anchor=near xticklabel}] 
\addplot3[surf, shader=flat] file {./data/phasediag/cesnnpc-obslensigman.dat};
\nextgroupplot[xlabel={\labelsize $M$}, scaled x ticks=base 10:-3, every x tick scale label/.style={at={(xticklabel* cs:1.15,0.25cm)}, anchor=near xticklabel}] 
\addplot3[surf, shader=flat] file {./data/phasediag/ceskm-obslensigman.dat};
\nextgroupplot[colorbar, xlabel={\labelsize $M$}, scaled x ticks=base 10:-3, every x tick scale label/.style={at={(xticklabel* cs:1.15,0.25cm)}, anchor=near xticklabel}] 
\addplot3[surf, shader=flat] file {./data/phasediag/ceskmfp-obslensigman.dat};

\nextgroupplot[xlabel={\labelsize $M$}, ylabel={\labelsize $1/\perr$}, scaled x ticks=base 10:-3, every x tick scale label/.style={at={(xticklabel* cs:1.15,0.25cm)}, anchor=near xticklabel}] 
\addplot3[surf, shader=flat] file {./data/phasediag/cesnnpc-obslenp.dat};
\nextgroupplot[xlabel={\labelsize $M$}, scaled x ticks=base 10:-3, every x tick scale label/.style={at={(xticklabel* cs:1.15,0.25cm)}, anchor=near xticklabel}] 
\addplot3[surf, shader=flat] file {./data/phasediag/ceskm-obslenp.dat};
\nextgroupplot[colorbar, xlabel={\labelsize $M$}, scaled x ticks=base 10:-3, every x tick scale label/.style={at={(xticklabel* cs:1.15,0.25cm)}, anchor=near xticklabel}]  
\addplot3[surf, shader=flat] file {./data/phasediag/ceskmfp-obslenp.dat}; 

\end{groupplot}
\end{tikzpicture}
\caption{\label{fig:phasediag} Results of the synthetic data experiment. First row: CE as a function of $\sigma$ and $d(\powpar{X}{1},\powpar{X}{2})$ for $M = 400$ and $p = 1$. Second row: CE as a function of $M$ and $d(\powpar{X}{1},\powpar{X}{2})$ for $\sigma = 0.5$ and $p =1$. Third row: CE as a function of $M$ and $\sigma$ for $\nu_2 = 0.62 \pi$ ($d(\powpar{X}{1},\powpar{X}{2}) \approx 0.2$) and $p = 1$. Bottom row: CE as a function of $M$ and $1/p$ for $\nu_2 = 0.62 \pi$ and $\sigma = 0.5$.}
\vspace{0.6cm}

\begin{tikzpicture}[scale = 1]
\begin{groupplot}[group style={group size=3 by 4,horizontal sep=0.08\columnwidth,vertical sep=0.12\columnwidth,xlabels at=edge bottom, ylabels at=edge left},
x label style={at={(axis description cs:0.5,-0.35)},anchor=north},
width=0.34\columnwidth, /tikz/font=\footnotesize, colormap/blackwhite, view={0}{90}, point meta min=0.0, point meta max=0.5, minor tick num=4] 
\nextgroupplot[title = NNPC($d_{L^2}$), xlabel={\labelsize $\sigma$}, ylabel={\labelsize $d(X^{(1)},X^{(2)})$}]
\addplot3[surf, shader=flat] file {./data/phasediag/cesnnpcl2-distsigman.dat};
\nextgroupplot[title = KM($d_{L^2}$), xlabel={\labelsize $\sigma$}] 
\addplot3[surf, shader=flat] file {./data/phasediag/ceskml2-distsigman.dat};
\nextgroupplot[title = KMit($d_{L^2}$), colorbar, colorbar style={
at={(0.712\columnwidth, 0.115\columnwidth)},anchor=north west}, xlabel={\labelsize $\sigma$}] 
\addplot3[surf, shader=flat] file {./data/phasediag/ceskmfpl2-distsigman.dat};

\nextgroupplot[xlabel={\labelsize $M$}, ylabel={\labelsize $d(X^{(1)},X^{(2)})$}, scaled x ticks=base 10:-3, every x tick scale label/.style={at={(xticklabel* cs:1.15,0.25cm)}, anchor=near xticklabel}] 
\addplot3[surf, shader=flat] file {./data/phasediag/cesnnpcl2-distobslen.dat};
\nextgroupplot[xlabel={\labelsize $M$}, scaled x ticks=base 10:-3, every x tick scale label/.style={at={(xticklabel* cs:1.15,0.25cm)}, anchor=near xticklabel}] 
\addplot3[surf, shader=flat] file {./data/phasediag/ceskml2-distobslen.dat};
\nextgroupplot[colorbar, xlabel={\labelsize $M$}, scaled x ticks=base 10:-3, every x tick scale label/.style={at={(xticklabel* cs:1.15,0.25cm)}, anchor=near xticklabel}] 
\addplot3[surf, shader=flat] file {./data/phasediag/ceskmfpl2-distobslen.dat};

\nextgroupplot[xlabel={\labelsize $M$}, ylabel={\labelsize $\sigma$}, scaled x ticks=base 10:-3, every x tick scale label/.style={at={(xticklabel* cs:1.15,0.25cm)}, anchor=near xticklabel}] 
\addplot3[surf, shader=flat] file {./data/phasediag/cesnnpcl2-obslensigman.dat};
\nextgroupplot[xlabel={\labelsize $M$}, scaled x ticks=base 10:-3, every x tick scale label/.style={at={(xticklabel* cs:1.15,0.25cm)}, anchor=near xticklabel}] 
\addplot3[surf, shader=flat] file {./data/phasediag/ceskml2-obslensigman.dat};
\nextgroupplot[colorbar, xlabel={\labelsize $M$}, scaled x ticks=base 10:-3, every x tick scale label/.style={at={(xticklabel* cs:1.15,0.25cm)}, anchor=near xticklabel}] 
\addplot3[surf, shader=flat] file {./data/phasediag/ceskmfpl2-obslensigman.dat};

\nextgroupplot[xlabel={\labelsize $M$}, ylabel={\labelsize $1/\perr$}, scaled x ticks=base 10:-3, every x tick scale label/.style={at={(xticklabel* cs:1.15,0.25cm)}, anchor=near xticklabel}] 
\addplot3[surf, shader=flat] file {./data/phasediag/cesnnpcl2-obslenp.dat};
\nextgroupplot[xlabel={\labelsize $M$}, scaled x ticks=base 10:-3, every x tick scale label/.style={at={(xticklabel* cs:1.15,0.25cm)}, anchor=near xticklabel}] 
\addplot3[surf, shader=flat] file {./data/phasediag/ceskml2-obslenp.dat};
\nextgroupplot[colorbar, xlabel={\labelsize $M$}, scaled x ticks=base 10:-3, every x tick scale label/.style={at={(xticklabel* cs:1.15,0.25cm)}, anchor=near xticklabel}]  
\addplot3[surf, shader=flat] file {./data/phasediag/ceskmfpl2-obslenp.dat}; 

\end{groupplot}
\end{tikzpicture}
\caption{\label{fig:phasediag-l2} CE as a function of $d(\powpar{X}{1},\powpar{X}{2})$, $M$, $\sigma$, and $\perr$ for variants of NNPC, KM, and KMit based on $d_{L^2}$, using the same values as in the setup in Fig. \ref{fig:phasediag} for the model parameters that are not varied.}
\vspace{-0.2cm}
\end{figure}

\begin{figure}[t]
\centering
\begin{tikzpicture}[scale = 1]
\begin{groupplot}[group style={group size=3 by 4,horizontal sep=0.08\columnwidth,vertical sep=0.12\columnwidth,xlabels at=edge bottom, ylabels at=edge left},
x label style={at={(axis description cs:0.5,-0.35)},anchor=north},
width=0.34\columnwidth, /tikz/font=\footnotesize, colormap/blackwhite, view={0}{90}, point meta min=0.0, point meta max=0.5, minor tick num=4]
\nextgroupplot[title = SL, xlabel={\labelsize $\sigma$}, ylabel={\labelsize $d(X^{(1)},X^{(2)})$}]
\addplot3[surf, shader=flat] file {./data/phasediag/cessl-distsigman.dat};
\nextgroupplot[title = AL, xlabel={\labelsize $\sigma$}] 
\addplot3[surf, shader=flat] file {./data/phasediag/cesal-distsigman.dat};
\nextgroupplot[title = CL, colorbar, colorbar style={
at={(0.712\columnwidth, 0.115\columnwidth)},anchor=north west}, xlabel={\labelsize $\sigma$}] 
\addplot3[surf, shader=flat] file {./data/phasediag/cescl-distsigman.dat};

\nextgroupplot[xlabel={\labelsize $M$}, ylabel={\labelsize $d(X^{(1)},X^{(2)})$}, scaled x ticks=base 10:-3, every x tick scale label/.style={at={(xticklabel* cs:1.15,0.25cm)}, anchor=near xticklabel}] 
\addplot3[surf, shader=flat] file {./data/phasediag/cessl-distobslen.dat};
\nextgroupplot[xlabel={\labelsize $M$}, scaled x ticks=base 10:-3, every x tick scale label/.style={at={(xticklabel* cs:1.15,0.25cm)}, anchor=near xticklabel}] 
\addplot3[surf, shader=flat] file {./data/phasediag/cesal-distobslen.dat};
\nextgroupplot[colorbar, xlabel={\labelsize $M$}, scaled x ticks=base 10:-3, every x tick scale label/.style={at={(xticklabel* cs:1.15,0.25cm)}, anchor=near xticklabel}] 
\addplot3[surf, shader=flat] file {./data/phasediag/cescl-distobslen.dat};

\nextgroupplot[xlabel={\labelsize $M$}, ylabel={\labelsize $\sigma$}, scaled x ticks=base 10:-3, every x tick scale label/.style={at={(xticklabel* cs:1.15,0.25cm)}, anchor=near xticklabel}] 
\addplot3[surf, shader=flat] file {./data/phasediag/cessl-obslensigman.dat};
\nextgroupplot[xlabel={\labelsize $M$}, scaled x ticks=base 10:-3, every x tick scale label/.style={at={(xticklabel* cs:1.15,0.25cm)}, anchor=near xticklabel}] 
\addplot3[surf, shader=flat] file {./data/phasediag/cesal-obslensigman.dat};
\nextgroupplot[colorbar, xlabel={\labelsize $M$}, scaled x ticks=base 10:-3, every x tick scale label/.style={at={(xticklabel* cs:1.15,0.25cm)}, anchor=near xticklabel}] 
\addplot3[surf, shader=flat] file {./data/phasediag/cescl-obslensigman.dat};

\nextgroupplot[xlabel={\labelsize $M$}, ylabel={\labelsize $1/p$}, scaled x ticks=base 10:-3, every x tick scale label/.style={at={(xticklabel* cs:1.15,0.25cm)}, anchor=near xticklabel}] 
\addplot3[surf, shader=flat] file {./data/phasediag/cessl-obslenp.dat};
\nextgroupplot[xlabel={\labelsize $M$}, scaled x ticks=base 10:-3, every x tick scale label/.style={at={(xticklabel* cs:1.15,0.25cm)}, anchor=near xticklabel}] 
\addplot3[surf, shader=flat] file {./data/phasediag/cesal-obslenp.dat};
\nextgroupplot[colorbar, xlabel={\labelsize $M$}, scaled x ticks=base 10:-3, every x tick scale label/.style={at={(xticklabel* cs:1.15,0.25cm)}, anchor=near xticklabel}] 
\addplot3[surf, shader=flat] file {./data/phasediag/cescl-obslenp.dat};

\end{groupplot}
\end{tikzpicture}
\caption{\label{fig:phasediag-hierarchical} CE for single linkage, average linkage, and complete linkage hierarchical clustering as a function of $d(\powpar{X}{1},\powpar{X}{2})$, $M$, $\sigma$, and $p$, using the same values as in the setup in Fig. \ref{fig:phasediag} for the model parameters that are not varied.}
\vspace{1.04cm}

\begin{tikzpicture}[scale = 1]
\begin{groupplot}[group style={group size=2 by 4,horizontal sep=0.08\columnwidth,vertical sep=0.12\columnwidth,xlabels at=edge bottom, ylabels at=edge left},
x label style={at={(axis description cs:0.5,-0.35)},anchor=north},
width=0.34\columnwidth, /tikz/font=\footnotesize, colormap/blackwhite, view={0}{90}, point meta min=0.0, point meta max=0.5, minor tick num=4] 
\nextgroupplot[title = NNPC($d_{L^\infty}$), xlabel={\labelsize $\sigma$}, ylabel={\labelsize $d(X^{(1)},X^{(2)})$}]
\addplot3[surf, shader=flat] file {./data/phasediag/cesnnpclinf-distsigman.dat};
\nextgroupplot[colorbar, colorbar style={
at={(0.47\columnwidth, 0.115\columnwidth)},anchor=north west}, title = KM($d_{L^\infty}$), xlabel={\labelsize $\sigma$}] 
\addplot3[surf, shader=flat] file {./data/phasediag/ceskmlinf-distsigman.dat};

\nextgroupplot[xlabel={\labelsize $M$}, ylabel={\labelsize $d(X^{(1)},X^{(2)})$}, scaled x ticks=base 10:-3, every x tick scale label/.style={at={(xticklabel* cs:1.15,0.25cm)}, anchor=near xticklabel}] 
\addplot3[surf, shader=flat] file {./data/phasediag/cesnnpclinf-distobslen.dat};
\nextgroupplot[colorbar, xlabel={\labelsize $M$}, scaled x ticks=base 10:-3, every x tick scale label/.style={at={(xticklabel* cs:1.15,0.25cm)}, anchor=near xticklabel}] 
\addplot3[surf, shader=flat] file {./data/phasediag/ceskmlinf-distobslen.dat};

\nextgroupplot[xlabel={\labelsize $M$}, ylabel={\labelsize $\sigma$}, scaled x ticks=base 10:-3, every x tick scale label/.style={at={(xticklabel* cs:1.15,0.25cm)}, anchor=near xticklabel}] 
\addplot3[surf, shader=flat] file {./data/phasediag/cesnnpclinf-obslensigman.dat};
\nextgroupplot[colorbar, xlabel={\labelsize $M$}, scaled x ticks=base 10:-3, every x tick scale label/.style={at={(xticklabel* cs:1.15,0.25cm)}, anchor=near xticklabel}] 
\addplot3[surf, shader=flat] file {./data/phasediag/ceskmlinf-obslensigman.dat};

\nextgroupplot[xlabel={\labelsize $M$}, ylabel={\labelsize $1/\perr$}, scaled x ticks=base 10:-3, every x tick scale label/.style={at={(xticklabel* cs:1.15,0.25cm)}, anchor=near xticklabel}] 
\addplot3[surf, shader=flat] file {./data/phasediag/cesnnpclinf-obslenp.dat};
\nextgroupplot[colorbar, xlabel={\labelsize $M$}, scaled x ticks=base 10:-3, every x tick scale label/.style={at={(xticklabel* cs:1.15,0.25cm)}, anchor=near xticklabel}] 
\addplot3[surf, shader=flat] file {./data/phasediag/ceskmlinf-obslenp.dat};

\end{groupplot}
\end{tikzpicture}
\caption{\label{fig:phasediag-linf} CE as a function of $d(\powpar{X}{1},\powpar{X}{2})$, $M$, $\sigma$, and $\perr$ for variants of NNPC and KM based on $d_{L^\infty}$, using the same values as in the setup in Fig.~\ref{fig:phasediag} for the model parameters that are not varied.}
\vspace{-0.39cm}
\end{figure}

\subsection{Real data}

We perform experiments on two data sets, namely on human motion data and on EEG data.

\paragraph*{Human motion data} We consider the problem of clustering sequences of human motion data according to the underlying activities performed. Specifically, we consider the experiment conducted in \cite{li2011time, khaleghi2012online}, which 
uses the Carnegie Mellon Motion Capture database\footnote{available at \url{http://mocap.cs.cmu.edu}} containing motion sequences of 149 subjects performing various activities. The clustering algorithm in \cite{li2011time} first fits a linear dynamical system model to each motion sequence and then performs standard $k$-means clustering with the estimated model parameters (organized into vectors) as data points. In \cite{khaleghi2012online} an online clustering algorithm based on KM in combination with distributional distance is proposed. The motion vector-sequences in the Carnegie Mellon Motion Capture database describe the temporal evolution of marker positions on different body parts, recorded through optical tracking. The experiment in \cite{li2011time, khaleghi2012online} is based on subjects \#16 and \#35 for which the database contains 49 and 33 sequences, respectively, labeled either as ``walking'' or ``running''. We cluster the (scalar-valued) sequences describing the motion of the marker placed on the right foot of the subjects. It is argued in \cite{khaleghi2012online} that these sequences can be considered stationary ergodic. 
We assume the number of generative models $L = 2$ to be known and set $q=5$ (good performance was observed for $4 \leq q \leq 10$). For the BT estimator, we use the Bartlett window $g^B_W$, defined in \eqref{eq:defbartlettwin}, with $W$ given by the sequence length, and we normalize the BT PSD estimates to unit power.  
Table \ref{tab:CEMOCAP} lists the CE, the running times in seconds, and for comparison with the results in \cite{li2011time, khaleghi2012online} also the entropy $S$ of the clustering confusion matrix (see \cite[Sec. 6]{li2011time} for the definition of $S$). 
This comparison reveals that for subject \#35 NNPC, KM, and KMit all outperform the algorithm in \cite{li2011time} and match the performance of that in \cite{khaleghi2012online}, while for subject  \#16 NNPC significantly outperforms both the algorithms in \cite{li2011time, khaleghi2012online} as well as KM and KMit.

\begin{table}[t]
\caption{\label{tab:CEMOCAP} CE, $S$, and running time $t$ (in seconds) for clustering of human motion sequences}
\renewcommand{\arraystretch}{1.5}
\setlength\tabcolsep{2.5pt}
\centering
\footnotesize{
\begin{tabular}{| c | c | c | c | c | c | c | c | c | c || c | c |}
\hline
& \multicolumn{3}{c |}{NNPC} & \multicolumn{3}{c|}{KM} & \multicolumn{3}{c||}{KMit} & \cite{khaleghi2012online} & \cite{li2011time}\\
  subject & CE & $S$ & $t$ & CE & $S$ & $t$ & CE & $S$ & $t$ & $S$ & $S$ \\
\hline
\#16 & 0.02 & 0.09 & 0.206 & 0.24 & 0.55 & 0.029 & 0.20 & 0.49 & 0.038 & 0.21 & 0.37\\
\#35 & 0 & 0 & 0.185 & 0 & 0 & 0.017 & 0 & 0 & 0.024 & 0 & 0.10 \\
\hline
\end{tabular}}
\end{table}

\vspace{0.25cm}
\paragraph*{EEG data} 
We perform an experiment similar to that in \cite[Sec. 5]{maharaj2011fuzzy}, which considers clustering of segments of EEG recordings of healthy subjects and of subjects experiencing epileptic seizure according to whether seizure activity is present or not. It is argued in \cite{sanei2008eeg} that EEG recordings can be modeled as stationary ergodic random processes. 
We use subsets A and E of the publicly available\footnote{\url{http://ntsa.upf.edu/downloads/andrzejak-rg-et-al-2001-indications-nonlinear-deterministic-and-finite-dimensional}} EEG data set described in \cite{andrzejak2001indications}. Each of these two subsets contains $100$ EEG segments of $23.6$s duration, acquired at a sampling rate of $173.61$Hz. 
We refer to \cite{andrzejak2001indications} for a more detailed description of acquisition and preprocessing aspects. 

We compare the performance of NNPC, KM, and KMit as a function of $W$ and $q$ (for NNPC). We center each EEG segment by subtracting its (estimated) mean and use a Bartlett window $g^B_W$, as defined in \eqref{eq:defbartlettwin}, of variable length $W$ for the BT PSD estimator. Furthermore, we normalize the PSD estimates to unit power and assume the number of clusters $L=2$ to be known. Fig.~\ref{fig:eegparam} shows the CE obtained for NNPC as a function of the window length $W$ and of $q$, as well as the CE obtained for KM and KMit as a function of $W$. It can be seen that NNPC is robust to small variations of $q$ and $W$ around 
the pair $(q,W)$ corresponding to the minimum CE (marked by a white dot in Fig.~\ref{fig:eegparam}). Similarly, KM and KMit yield a CE close to their respective minima for a large range of values for $W$. In Table \ref{tab:ceeeg}, we report the minimum CE achieved by each algorithm, along with the corresponding running times and CE-minimizing values for $W$ and $q$ (in the case of NNPC), all chosen based on results depicted in Fig. \ref{fig:eegparam}.  
The minimum CE obtained for NNPC is significantly lower than that corresponding to KM and KMit. 

\begin{table}[h]
\caption{\label{tab:ceeeg} Clustering EEG segments: Minimum CE, running time $t$ (in seconds), and corresponding parameter choices.}
\renewcommand{\arraystretch}{1.5}
\centering
\footnotesize{
\begin{tabular}{|c | c | c | c | c |}
\hline
& min CE & $t$ & $W$ & $q$ \\
\hline
NNPC & 0.005 & 0.694 & 840 & 3 \\
KM & 0.360 & 0.482 & 640 & - \\
KMit & 0.095 & 0.954 & 520 & - \\
\hline
\end{tabular}
}
\end{table}

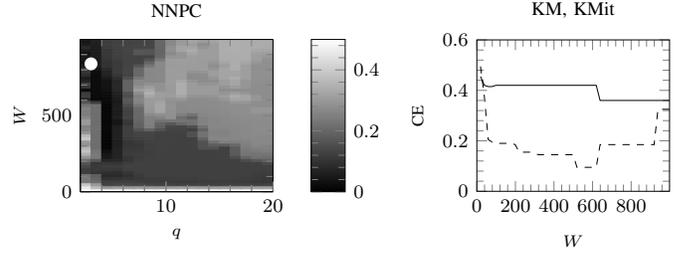
\begin{figure}[h]
\centering
\begin{tikzpicture}[scale = 0.9]
\begin{groupplot}[group style={group size=2 by 1,horizontal sep=0.34\columnwidth,vertical sep=1.3cm,xlabels at=edge bottom, ylabels at=edge left}, 
width=0.5\columnwidth, /tikz/font=\footnotesize, colormap/blackwhite, view={0}{90}, point meta min=0.0, point meta max=0.5, minor tick num=4, xmin=2, xmax=20, ymin=0, ymax=999]
\nextgroupplot[title = NNPC, xlabel={\labelsize $q$}, ylabel={\labelsize $W$},
mesh/ordering=y varies, colorbar]
\addplot3[surf, shader=flat] file {./data/eeg/cesnnpc.dat};
\addplot3+[only marks,mark=*,mark options={white},mark size=2.5] coordinates {(3,839,0.01)};
\nextgroupplot[title = {KM, KMit}, xlabel={\labelsize $W$}, ylabel={\labelsize CE}, y label style={at={(axis description cs:0.2,0.5)},anchor=south},
ymin = 0, ymax = 0.6, xmin=0, xmax=999]
\addplot +[mark=none,solid,black] table[x index=0,y index=1]{./data/eeg/ceskm.dat};
\addplot +[mark=none,dashed,black] table[x index=0,y index=2]{./data/eeg/ceskm.dat};
\end{groupplot}
\end{tikzpicture}
\caption{\label{fig:eegparam} Left: CE of NNPC for EEG recordings as a function of $q$ and $W$. The white dot in the left figure shows the location of minimum CE. Right: CE of KM (solid line) and KMit (dashed line) as a function of $W$.}
\end{figure}

\appendices

\section{Proofs of Theorems \ref{thm:NNPC} and \ref{thm:KM}}
\label{sec:ProofMain}
The central element in the proofs of Theorems \ref{thm:NNPC} and \ref{thm:KM} is the following result, proven in Appendix \ref{sec:PfDistCond}.
\begin{thm} \label{lem:DistCond}
Consider a data set $\cX$ generated according to the statistical data model described in Sec. \ref{sec:mainres}. Then, the clustering condition \eqref{eq:ClusCond} implies that
\begin{equation} \label{eq:DistRelation}
\min_{\substack{k, \l \in [L] \colon \\ k \neq \l}} \, \min_{\substack{i \in [n_\l], \\ j \in [n_k]}} d(\xkh_j,\xlh_i)
> \max_{\l \in [L]} \; \max_{\substack{ i,j \in [n_\l] \colon \\ i \neq j}} d(\xlh_i,\xlh_j)
\end{equation}
holds with probability at least $1 - 6N/M^2$.
\end{thm}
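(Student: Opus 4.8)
The plan is to establish \eqref{eq:DistRelation} by reducing it, through the triangle inequality, to a single uniform bound on how well the Blackman--Tukey estimate $\sh_i(f)$ approximates the PSD $\slt(f)=\sl(f)+\sigma^2$ of the noisy process $\Xlt$ associated with $\xlh_i$. Set $\epsilon_i\defeq\frac12\int_0^1|\sh_i(f)-\slt(f)|\,\d f$. The key point is that the additive $\sigma^2$-shift cancels in the $L^1$-distance between PSDs of noisy processes, so $\frac12\int_0^1|\skt(f)-\slt(f)|\,\d f=d(\Xk,\Xl)$ for $k\neq\l$. The triangle inequality then gives $d(\xkh_j,\xlh_i)\ge d(\Xk,\Xl)-\epsilon_j-\epsilon_i$ when $j$ stems from model $k$ and $i$ from model $\l$ with $k\neq\l$, and $d(\xlh_i,\xlh_j)\le\epsilon_i+\epsilon_j$ when $i,j$ stem from the same model. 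Hence, on any event on which $\max_i\epsilon_i\le\epsilon$, the left-hand side of \eqref{eq:DistRelation} is at least $\min_{k\neq\l}d(\Xk,\Xl)-2\epsilon$ and its right-hand side is at most $2\epsilon$, so \eqref{eq:DistRelation} holds whenever $\min_{k\neq\l}d(\Xk,\Xl)>4\epsilon$. It therefore suffices to show that $\max_i\epsilon_i$ is at most one quarter of the right-hand side of \eqref{eq:ClusCond} on an event of probability at least $1-6N/M^2$; since this is a bound on each $\epsilon_i$ separately followed by a union bound over the $N$ observations, no independence across observations is needed (which is why the theorem does not require it).

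Next, split $\epsilon_i\le\frac12\int_0^1|\E{\sh_i(f)}-\slt(f)|\,\d f+\frac12\int_0^1|\sh_i(f)-\E{\sh_i(f)}|\,\d f$ into a bias term and a fluctuation term. For the bias, $\E{\rh_i[m]}=(1-|m|/M)\,\errwin[m]\,\rlt[m]$ because $\rlc[m]=\errwin[m]\rlt[m]$; inserting this into \eqref{eq:BlackmanTukey} with the bias-corrected window $\gh[m]=g[m]/\errwin[m]$ gives $\E{\sh_i(f)}-\slt(f)=-\sum_m h[m]\rlt[m]e^{-\im 2\pi f m}$ with $h$ as in \eqref{eq:weightedwindow}. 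Since $g[0]=1$ forces $h[0]=0$, the noise contribution $\sigma^2\delta[m]$ drops out, so the bias is bounded by $\frac12\sum_m|h[m]||\rl[m]|=\frac12\mul\le\frac12\mu_{\max}$; this produces the $2\mu_{\max}$ term in \eqref{eq:ClusCond}.

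For the fluctuation term, write $\sh_i(f)=\frac1M\transp{\vx}\mH_f\vx$ with $\vx=\vxl_i$ and $\mH_f$ the real symmetric Toeplitz matrix whose $k$th diagonal equals $\gh[k]\cos(2\pi f k)$. Conditioned on the Bernoulli process $\El$, the vector $\vx$ is Gaussian with covariance $\mP_\vb\Rlt\mP_\vb$, where $\vb$ collects the relevant values of $\El$, so $\norm[2\to 2]{\mP_\vb\Rlt\mP_\vb}\le\norm[2\to 2]{\Rlt}\le\sup_f\slt(f)\le B+\sigma^2$. Applying the Gaussian concentration inequality \cite[Lem.~1]{demanet2012matrix} to the quadratic form $\transp{\vx}\mH_f\vx$, together with $\norm[2\to 2]{\mH_f}\lesssim A/\perr^2$ (from the $L^\infty$-norm of the DTFT of $\gh[m]=g[m]/\errwin[m]$ and $\errwin[m]\ge\perr^2$) and the crude bound $\norm[F]{\mH_f}\le\sqrt M\,\norm[2\to 2]{\mH_f}$, shows that for each fixed $f$, $|\sh_i(f)-\E{\sh_i(f)}|$ is at most a constant times $\frac{A(B+\sigma^2+\sqrt2(1+\perr)(1+\sigma^2))}{\perr^2}\sqrt{\log M/M}$ outside an event of probability $O(M^{-3})$. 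To pass from a fixed $f$ to $\frac12\int_0^1|\sh_i(f)-\E{\sh_i(f)}|\,\d f$, exploit that $f\mapsto\sh_i(f)$ is a trigonometric polynomial of degree at most $M-1$: either evaluate the fixed-$f$ bound on an $O(M)$-point grid and upgrade it to $\sup_f$ (hence to the integral) via a Bernstein-type estimate, or bound the integral directly by $\big(\sum_m\gh[m]^2(\rh_i[m]-\E{\rh_i[m]})^2\big)^{1/2}\le\norm[\ell^2]{\gh}\max_m|\rh_i[m]-\E{\rh_i[m]}|$ (Cauchy--Schwarz and Parseval) and union-bound the $2M-1$ single-lag quadratic forms $\rh_i[m]$. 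Either route, a union bound over the $O(M)$ events per observation and the $N$ observations yields the probability $1-6N/M^2$, and calibrating the constant so that $4\epsilon$ matches the right-hand side of \eqref{eq:ClusCond} finishes the proof.

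I expect the fluctuation term to be the main obstacle. It requires a concentration bound on a Gaussian quadratic form that holds uniformly in $f$ and carries \emph{precisely} the dependence on $A$, $B$, $\sigma^2$, and $\perr$ that appears in \eqref{eq:ClusCond}; one has to deal with the randomness of the Bernoulli process (which renders $\vx$ only conditionally Gaussian and couples the mask to the covariance), track the spectral- and Frobenius-norm estimates that govern the $\sqrt{\log M/M}$ rate, and combine correctly the sub-Gaussian and sub-exponential tails furnished by \cite[Lem.~1]{demanet2012matrix}. The bias computation and the triangle-inequality bookkeeping are routine in comparison.
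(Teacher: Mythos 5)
Your skeleton matches the paper's: reduce \eqref{eq:DistRelation} via the triangle inequality to a uniform bound on the PSD estimation error (the paper works with $\sup_f$ of the error rather than its $L^1$ norm, needing $\min_{k\neq\l}d(\Xk,\Xl)>2\varepsilon$ instead of your $4\epsilon$, but this is equivalent bookkeeping), note that the $\sigma^2$ offset cancels, split into bias and fluctuation, and union-bound over the $N$ observations without any independence assumption. Your bias computation is also correct and is exactly how the $2\mu_{\max}$ term arises.

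The genuine gap is in the fluctuation term. You decompose the error as $(\sh_i-\E{\sh_i})+(\E{\sh_i}-\slt)$ with the \emph{full} expectation, and then propose to control $\sh_i(f)-\E{\sh_i(f)}$ by conditioning on the Bernoulli mask and applying Gaussian quadratic-form concentration. But conditioning on $\bxi$ only controls $\sh_i(f)-\EX{\vy}{\sh_i(f)\mid\bxi}$; the conditional mean $\EX{\vy}{\sh_i(f)\mid\bxi}=\frac{1}{M}\tr\big(\mGh(f)\,\mP_{\bxi}(\mR+\sigma^2\mI)\mP_{\bxi}\big)$ is itself random and fluctuates around $\E{\sh_i(f)}$. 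That residual is a quadratic form $\frac{1}{M}\big(\transp{\bxi}\mD\bxi-\E{\transp{\bxi}\mD\bxi}\big)$ in the Bernoulli vector, and it is precisely the source of the $\sqrt{2}(1+p)(1+\sigma^2)$ portion of the constant in \eqref{eq:ClusCond} — a constant you write into your claimed bound but for which your argument supplies no mechanism, since \cite[Lem.~1]{demanet2012matrix} says nothing about non-Gaussian, nonzero-mean Boolean vectors. The paper devotes a separate result (Lemma~2, proved by decoupling the off-diagonal sum, Hoeffding's lemma, and the exponential Chebyshev inequality, following the Hanson--Wright template) to exactly this term, and explicitly notes that off-the-shelf concentration bounds for Boolean polynomials do not yield the required dependence on $\norm[2 \to 2]{\mD}$ alone. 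Without this piece your proof cannot close; the $2/M^2$ (Gaussian part) and $4/M^2$ (Bernoulli part) tail bounds are also what combine to give the stated $6N/M^2$ failure probability. A secondary, smaller issue: your grid-plus-Bernstein (or per-lag union bound over $2M-1$ quadratic forms) route to uniformity in $f$ would inflate the logarithmic factor or the failure probability relative to the stated constants, whereas the paper obtains an $f$-uniform threshold directly from the $f$-uniform operator-norm bound $\norm[2 \to 2]{\smash{\mGh(f)}}\le A/p^2$.
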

Theorem \ref{lem:DistCond} says that under the clustering condition \eqref{eq:ClusCond} observations stemming from the same generative model are closer (in terms of the distance measure $d$) than observations originating from different generative models. This property 
is known in the clustering literature as the \emph{strict separation property} \cite{balcan2008discriminative}. We now show how Theorems \ref{thm:NNPC} and \ref{thm:KM} follow directly from the strict separation property. 
\paragraph*{Proof of Theorem \ref{thm:NNPC}} \label{sec:ThmNNPCproof}
Under the condition $q \leq \min_{\l \in [L]} (n_\l - 1)$ the NFC property is a direct consequence of \eqref{eq:DistRelation}, which by Theorem \ref{lem:DistCond}, is implied by the clustering condition \eqref{eq:ClusCond}. The condition $q \leq \min_{\l \in [L]} (n_\l - 1)$ is necessary for the NFC property to hold as choosing $q > \min_{\l \in [L]} (n_\l - 1)$ would force NNPC to select observations from $\cX \backslash \cX_\l$ for at least one of the data points $\xlh_i$, thereby resulting in a violation of the NFC property.

\paragraph*{Proof of Theorem \ref{thm:KM}}
\label{sec:ThmKMproof}
The proof is effected by first showing that in Step 3 KM selects an observation with a different underlying generative model in every iteration, i.e., the set of cluster centers $\{ \xh_{c_\l} \}_{\l = 1}^L$ contains exactly one observation from each generative model, provided that the clustering condition \eqref{eq:ClusCond} and hence, by Theorem \ref{lem:DistCond}, \eqref{eq:DistRelation} holds. The argument is then concluded by noting that \eqref{eq:DistRelation} implies directly that the partition $\hat \cX_1,\dots,\hat \cX_L$ obtained in Step 4 corresponds to the true partition $\cX_1,\dots,\cX_L$.

It remains to establish that the cluster centers $\xh_{c_\l}$ selected in Step 3 of KM, indeed, all originate from different generative models. This is accomplished by induction. 
For $v=1$  
the claim holds trivially, as we have selected a single cluster center only, namely $\xh_{c_1}$. 
The base case is hence established. For the inductive step, suppose that after the $v$-th iteration in Step 3 of KM the observations $\{ \xh_{c_1}, \ldots, \xh_{c_v} \}$ all come from different generative models, and assume w.l.o.g. that the generative model underlying $\xh_{c_\l}$ has index $\l$, $\l \in [v]$. 
In iteration $v+1$ (i.e., for the selection of $x_{c_{v+1}}$), we have 
\begin{align}
&\max_{i \in [N]} \min_{\l \in [v]} d(\xh_i, \xlh_{c_\l}) \nonumber \\
& =  \max  \Bigg\{ \! \max_{\substack{k \in [v], \\ i \in [n_k]}} \, \min_{\l \in [v]} d(\xkh_i\!,\xlh_{c_\l}), 
\max_{\substack{k \in [L] \backslash[v], \\ i \in [n_k]}} \, \min_{\l \in [v]} d(\xkh_i\!,\xlh_{c_\l}) \! \Bigg\} \nonumber \\ 
& = \max \Bigg\{  \!\!\!     \underbrace{\max_{\substack{k \in [v],\\ i \in [n_k]}} d(\xkh_i\!,\xkh_{c_k})}_{\leq \underset{\l \in [L]}{\max} \underset{\scriptsize\substack{i,j \in [n_\l] \colon \\ i \neq j}}{\max} d(\xlh_i\!,\xlh_j)} , 
\underbrace{ \max_{\substack{k \in [L] \backslash[v], \\ i \in [n_k]}} \; \min_{ \l \in [v]} d(\xkh_i\!,\xlh_{c_\l})}_{ \substack{
\geq \underset{\scriptsize\substack{k \in [L] \backslash[v], \\ i \in [n_k]}}{\min} \; \underset{\scriptsize \l \in [v]}{\min} d(\xkh_i\!,\xlh_{c_\l}) \\
\geq \underset{\scriptsize\substack{k, \l \in [L] \colon \\ k \neq \l}}{\min} \; \underset{\scriptsize\substack{i \in [n_k], \\ j \in [n_\l]}}{\min} d(\xkh_i\!,\xlh_j)}
} \Bigg\} \label{eq:MinMax1} \\
&\, = \max_{\substack{k \in [L] \backslash[v],\\ i \in [n_k]}} \, \min_{\l \in [v]} d(\xkh_i\!,\xlh_{c_\l}), \label{eq:MinMaxLast}
\end{align}
where we applied \eqref{eq:DistRelation} to get \eqref{eq:MinMaxLast}
 from \eqref{eq:MinMax1}. Note that in the maximization in \eqref{eq:MinMaxLast} $k$ runs over $[L] \backslash [v]$ (i.e., the maximization in \eqref{eq:MinMaxLast} is over the observations in $\cX \backslash (\cX_1 \cup \dots \cup \cX_v)$), which implies that $\xh_{c_{v+1}}$ is guaranteed to correspond to a generative model that is different from those underlying $\xh_{c_1}, \dots, \xh_{c_v}$. This completes the induction argument.

\section{Proof of Theorem \ref{lem:DistCond}} \label{sec:PfDistCond}

We start by quantifying the deviation of the estimated distances $d(\xkh_j, \xlh_i)$ from the true distances $d(\Xk,\Xl)$ due to the PSD estimation error caused by finite observation length, noise, and missing entries.

Let $\slt(f) \defeq \sl(f) + \sigma^2$, $f \in [0,1)$, $\l \in [L]$, be the PSD of the noisy observation $\Xlt$ and denote the corresponding ACF by $\rlt$. 
With $\slh_i (f)$ as defined in \eqref{eq:BlackmanTukey} (recall that we use the modified window $\gh[m] = g[m]/ \errwin[m]$ in the BT estimator \eqref{eq:BlackmanTukey}), set $\el_i(f) \defeq \slh_i (f) -  \slt(f)$, 
and let $\varepsilon \defeq \max_{\l \in [L], i \in [n_\l]} \sup_{f \in [0,1)} \abss{\el_i(f)}$. We have for all $k, \l \in [L]$, $j \in [n_k]$, $i \in [n_\l]$, 
\begin{align}
&d(\xkh_j, \xlh_i) = \frac{1}{2} \int_0^1 \abs{\skh_j(f) - \slh_i(f)} \d f \nonumber \\
&= \frac{1}{2} \! \int_0^1\! \left\vert  \sk(f) \!+\! \sigma^2 \!+\! \ek_j(f) - ( \sl(f) \!+\! \sigma^2 \!+\! \el_i(f) ) \right\vert \d f \nonumber \\ 
&\leq \frac{1}{2} \! \int_0^1 \!  \left\vert \sk(f) \!-\! \sl(f) \right\vert \d f + \frac{1}{2} \! \int_0^1 \! \left\vert \ek_j(f) - \el_i(f) ) \right\vert \d f \nonumber \\
&\leq  d(\Xk,\Xl) + \frac{1}{2} \int_0^1 \abs{\ek_j(f)}  \d f + \frac{1}{2} \int_0^1 \abs{\el_i(f)} \d f  \label{eq:UBTriang1}\\
&\leq  d(\Xk,\Xl) + \varepsilon. \label{eq:DistUB}
\end{align}
Applying the reverse triangle inequality, it follows similarly that
\begin{equation}
d(\xkh_j, \xlh_i) \geq d(\Xk,\Xl) - \varepsilon, \label{eq:DistLB}
\end{equation}
for all $k, \l \in [L]$, $j \in [n_k]$, $i \in [n_\l]$. 
Replacing the RHS of \eqref{eq:DistRelation} by the upper bound in \eqref{eq:DistUB} and the LHS by the lower bound in \eqref{eq:DistLB}, we find that \eqref{eq:DistRelation} is implied by
\begin{equation} \label{eq:DistCondLBeps}
\min_{k, \l \in [L] \colon k \neq \l} d(\Xk, \Xl) > 2 \varepsilon.
\end{equation}

We continue by upper-bounding $\varepsilon$. To this end, define $\mQ_m \in \{0,1\}^{M \times M}$ according to $(\mQ_m)_{u,v} \allowbreak= 1$, if $v - u = m$, and $(\mQ_m)_{u,v} = 0$, else, and let $\mGh(f) \defeq \sum_{m \in \mc M} \gh[m] \cos(2 \pi f m) \mQ_m$. 
Now, with $\vx \in \reals^M$ the random vector whose elements are given by $\xlh_i$, it holds for $m \in \mc M = \{ -M +1, -M + 2, \dots, M - 1\}$ that
\begin{align}
\rlh_i [m] = \frac{\transp{\vx} \mQ_m \vx}{M} = \frac{\transp{\vy} \transp{\mC} \mP^\top_{\bxi} \mQ_m \mP_{\bxi} \mC \vy}{M}, \nonumber 
\end{align}
where we used $\vx = \mP_{\bxi} \mC \vy$, with the entries of $\vy$ i.i.d. standard normal, $\mC = (\mR + \sigma^2 \mI)^{1/2} \in \reals^{M \times M}$ with $\mR_{v,w} = \rl[w-v]$ the (Toeplitz) covariance matrix corresponding to $M$ consecutive elements of $\Xlt$, and $\bxi \in \{0,1\}^M$ indicates the locations of the observed entries of $\vx$. Note that $\mR$ is identical for all contiguous length-$M$ segments of $\Xlt$ thanks to stationarity, and $\mC$ is symmetric because $\mR + \sigma^2 \mI$ is symmetric.  
We next develop an upper bound on $\varepsilon$ according to
\begin{align}
&\sup_{f \in [0,1)} \abs{\el_i(f)} \nonumber = \sup_{f \in [0,1)} \abs{\slh_i(f) - \slt(f)} \nonumber \\
&= \sup_{f \in [0,1)} \bigg\vert \!\sum_{m \in \mc M} \gh[m] \rlh_i [m] e^{-\im 2 \pi f m} - \!  \sum_{m \in \mb Z} \rlt [m] e^{-\im 2 \pi f m} \bigg\vert \nonumber\\
&= \sup_{f \in [0,1)} \bigg\vert \! \!\underbrace{\sum_{m \in \mc M } \frac{\gh[m]}{M}\left( \transp{\vx} \mQ_m \vx - \EX{\vy}{ \transp{\vx} \mQ_m \vx } \right) e^{-\im 2 \pi f m}}
_{\substack{\frac{1}{M} \big( \transp{\vx} \left( \sum_{m \in \mc M} \gh[m] \cos(2 \pi fm) \mQ_m \right) \vx \\ \hspace{2cm} - \mb E_{\vy} \left[ \transp{\vx} \left( \sum_{m \in \mc M} \gh[m] \cos(2 \pi fm) \mQ_m \right) \vx \right] \big)}} \nonumber \\ 
& + \sum_{m \in \mc M} \frac{\gh[m]}{M} \, \EX{\vy}{ \transp{\vx} \mQ_m \vx } e^{-\im 2 \pi f m}  
- \sum_{m \in \mb Z} \rlt [m] e^{-\im 2 \pi f m} \bigg\vert \nonumber \\ 
& \leq  \sup_{f \in [0,1)} \bigg\vert\frac{1}{M} \left( \transp{\vy} \transp{\mC} \transp{\mP}_{\bxi} \mGh(f) \mP_{\bxi} \mC \vy \right.\nonumber \\
& \qquad \qquad \; \; \underbrace{ \qquad \qquad \quad \left. - \,\EX{\vy}{ \transp{\vy} \transp{\mC} \transp{\mP}_{\bxi} \mGh(f)  \mP_{\bxi} \mC \vy} \right) }_{ \eqdef \al_i(f)} \bigg\vert \nonumber \\
&  \qquad \qquad + \bigg\vert \underbrace{\sum_{m \in \mc M} \frac{\gh[m]}{M} \EX{\vy}{ \transp{\vx} \mQ_m \vx } - \sum_{m \in \mb Z} \rlt [m] }_{ \eqdef \bl_i} \bigg\vert, \label{eq:SupELIFinal}
\end{align}
where we used the fact that $\gh[m] (\transp{\vx} \mQ_m \vx - \EX{\vy}{ \transp{\vx} \mQ_m \vx }\!)$ is a real-valued even sequence ($\gh[m]$ and $\transp{\vx} \mQ_m \vx = M \rlh_i[m]$ are real-valued even by definition, the latter property implies that $\EX{\vy}{ \transp{\vx} \mQ_m \vx }$ is also real-valued and even). 
It now follows from \eqref{eq:SupELIFinal} that 
\[ \varepsilon \leq \max_{\l \in [L], i \in [n_\l]} \big( \sup_{f \in [0,1)} \big\vert \al_i (f) \big\vert + \big\vert \bl_i \big\vert \big) \] and hence \eqref{eq:ClusCond} implies \eqref{eq:DistRelation} via \eqref{eq:DistCondLBeps} on the event $\Fstar \defeq \bigcap_{\l \in [L], i \in [n_\l]} (\Fali \cap \Fbli)$ with
\begin{align*}
	\Fali &\defeq  \left\{ \sup_{f \in [0,1)}  \abs{\al_i (f)}  <  \frac{4 A (B+ \sigma^2)}{\perr^2} \sqrt{\frac{2 \log M}{M}} \right\} \; \; \text{and} \nonumber \\
	\Fbli &\defeq  \left\{ \abs{\bl_i}  < 8 (1+ p) \frac{A(1+ \sigma^2)}{\perr^2} \sqrt{\frac{\log M}{M} } + \mu_{\max} \right\}.
\end{align*}
With the upper bound on $\PR{\smash{\Falic}}$ resulting from \eqref{eq:PrFaUB} and that on $\PR{\smash{\Fblic}}$ in \eqref{eq:PrFbUB}, 
application of the union bound according to
\begin{align}
\PR{\Fstar} \geq 1 - \sum_{\l \in [L], i \in [n_\l]} \left( \PR{\Falic} + \PR{\Fblic} \right) \geq 1 - \frac{6N}{M^2}
\end{align}
completes the proof.

We proceed to the upper bound on $\PR{\smash{\Falic}}$.

\paragraph*{Upper bound on $\PR{\smash{\Falic}}$\normalfont} 
Conditioning on $\bxi$ and setting $\mB \defeq \transp{\mC} \transp{\mP}_{\bxi} \mGh(f) \mP_{\bxi} \mC$, we establish an upper bound on the tail probability of $\sup_{f \in [0,1)}  \abs{\al_i (f)}$ by invoking a well-known concentration of measure result for quadratic forms in Gaussian random vectors \cite[Lem. 1]{demanet2012matrix}, namely
\begin{align}
&\mathrm{P} \Big[ \abs{ \transp{\vy} \mB \vy - \E{\transp{\vy} \mB \vy}} \nonumber \\
&\qquad \quad   \geq \norm[F]{\mB + \transp{\mB}}\sqrt{\delta} + 2 \norm[2 \to 2]{\mB} \delta \big\vert \bxi \Big] \leq 2 e^{-\delta}. \label{eq:GaussQuadForm}
\end{align}
Next, we note that $\norm[F]{\smash{\mB + \transp{\mB}}} \leq 2 \norm[F]{\mB} \leq 2 \sqrt{M} \norm[2 \to 2]{\mB}$ and 
\begin{align}
\norm[2 \to 2]{\mB} &\leq \norm[2 \to 2]{\smash{\underbrace{\mC}_{=\, \mR + \sigma^2 \mI}}}^2 \underbrace{\norm[2 \to 2]{\mP_{\bxi}}^2}_{\leq 1} \norm[2 \to 2]{\smash{\mGh(f)}} \nonumber \\ 
&\leq \frac{A (B + \sigma^2)}{\perr^2}, \nonumber
\end{align}
where the second inequality follows as both $\mR$ and $\mGh(f)$ are symmetric Toeplitz matrices and hence, by \cite[Lem. 4.1]{gray2006toeplitz}, $\norm[2 \to 2]{\mR} \leq \sup_{f \in [0,1)} \sl(f) \leq B$ and 
\begin{align}
\norm[2 \to 2]{\smash{\mGh(f)}} &\leq \sup_{f' \in [0,1)} \gh(f') \nonumber \\ 
&=\sup_{f' \in [0,1)} \frac{1}{p^2} g(f') + \underbrace{\left(\frac{1}{p} - \frac{1}{p^2}\right)}_{\leq 0} \underbrace{g[0]}_{= 1} \nonumber \\
&\leq \frac{1}{p^2} \sup_{f' \in [0,1)} g(f') =\frac{A}{p^2}, \label{eq:GhatOpUB}
\end{align}
where  
we used $\gh[m] = (1/p^2) g[m] + (1/p - 1/p^2)g[0]\delta[m]$. 
Now, setting $\delta = 2 \log (M)$ in \eqref{eq:GaussQuadForm} and using $\delta/M \leq \sqrt{\delta/M} < 1$, for $M \geq 1$, yields
\begin{align}
&\PR{\Falic \big\vert \bxi} = \mathrm{P}\left[\sup_{f \in [0,1)} \abs{\al_i (f)} \right.  \nonumber \\
& \qquad \qquad \qquad \qquad \left. \geq \frac{4 A (B+ \sigma^2)}{\perr^2} \sqrt{\frac{2 \log M}{M}} \bigg\vert \bxi\right] \leq \frac{2}{M^2}. \label{eq:PrFaUB}
\end{align}
The proof is concluded by noting that this bound holds uniformly over $\bxi \in \{0,1\}^M$ so that $\PR{\smash{\Falic}} \leq 2 / M^2$.

\paragraph*{Upper bound on $\PR{\smash{\Fblic}}$\normalfont}
Setting $\mGt \defeq \sum_{m \in \mc M} \gh [m] \mQ_m$, we start by rewriting the first sum in the definition of $\bl_i$ in \eqref{eq:SupELIFinal} as 
\begin{align}
&\sum_{m \in \mc M} \frac{\gh [m]}{M} \, \EX{\vy}{ \transp{\vx} \mQ_m \vx }
= \frac{1}{M} \, \EX{\vy}{ \transp{\vx} \bigg(\sum_{m \in \mc M} \gh [m]  \mQ_m \bigg) \vx } \nonumber \\
& \qquad \qquad = \frac{1}{M} \, \EX{\vy}{ \transp{\vy} \transp{\mC} \transp{\mP}_{\bxi} \mGt  \mP_{\bxi} \mC \vy } \nonumber \\
& \qquad \qquad = \frac{1}{M} \, \tr(\transp{\mC} \transp{\mP}_{\bxi} \mGt  \mP_{\bxi} \mC) \nonumber \\
& \qquad \qquad = \frac{1}{M} \, \tr(\transp{\mP}_{\bxi} \mGt  \mP_{\bxi} \underbrace{\mC \transp{\mC}}_{= \mR + \sigma^2 \mI}) \label{eq:misstrace1} \\
& \qquad \qquad = \frac{1}{M}\sum_{u,v \in [M]} \xi_u \xi_v \mGt_{u,v} (\underbrace{\mR_{v,u}}_{= \mR_{u,v}} + \sigma^2 \delta[u-v]) \label{eq:misstrace2} \\
& \qquad \qquad = \frac{1}{M}\transp{\bxi} (\mGt \circ (\mR + \sigma^2 \mI)) \bxi. \label{eq:misstrace3}
\end{align}
Now, setting $\mD \defeq \mGt \circ (\mR + \sigma^2 \mI)$ and using \eqref{eq:misstrace3}, we have
\begin{align}
&\left| \bl_i \right| \nonumber \\
&= \bigg\vert \frac{1}{M} \transp{\bxi} \mD \bxi - \frac{1}{M} \, \E{\transp{\bxi} \mD \bxi} +  \frac{1}{M} \, \E{\transp{\bxi} \mD \bxi} \!-\! \sum_{m \in \mb Z} \! \rlt [m] \bigg\vert \nonumber \\
&\leq \abs{ \frac{1}{M} \! \left( \transp{\bxi} \mD \bxi - \E{\transp{\bxi} \mD \bxi} \right) } \!+\! \bigg\vert \frac{1}{M} \, \E{\transp{\bxi} \mD \bxi} \!-\! \sum_{m \in \mb Z} \! \rlt [m] \bigg\vert \label{eq:bliUB0} \\
&\leq \bigg\vert \underbrace{\frac{1}{M} \left( \transp{\bxi} \mD \bxi - \E{\transp{\bxi} \mD \bxi} \right)}_{\eqdef \cl_i} \bigg\vert + \mu_{\max}. \label{eq:bliUB}
\end{align}
Here, the last inequality is a consequence of the following upper bound on the second term in~\eqref{eq:bliUB0}
\begin{align}
&\bigg\vert \frac{1}{M} \E{\transp{\bxi} \mD \bxi} - \sum_{m \in \mb Z} \rlt [m] \bigg\vert \nonumber \\
&= \bigg\vert \frac{1}{M}  \sum_{v,w \in [M]} \E{\xi_v \xi_w} \mGt_{v,w} (\mR_{v,w} + \sigma^2 \delta[v-w]) \nonumber \\
& \qquad \qquad - \sum_{m \in \mb Z} \rlt [m] \bigg\vert \label{eq:mubnd1} \\
&=  \bigg\vert \frac{1}{M} \sum_{v,w \in [M]} \underbrace{\E{\xi_v \xi_w} \gh[v-w]}_{u[v-w] \gh[v-w]= g[v-w]} \rlt[v-w] - \sum_{m \in \mb Z}\rlt [m] \bigg\vert \nonumber \\
&=  \bigg\vert \underbrace{g[0]}_{=1} (\rl[0] + \sigma^2) - (\rl[0] + \sigma^2) \nonumber \\
& \qquad \qquad + \sum_{m \in \mc M \backslash \{0\}} \left(\frac{M - \abs{m}}{M} g[m] \rl[m] - \rl[m]\right) \nonumber \\
& \qquad \qquad - \sum_{m \in \mb Z \backslash \mc M} \rl[m] \bigg\vert \label{eq:mubnd2} \\
&\leq \sum_{m \in \mb Z} \vert h [m] \vert \vert \rl[m] \vert \nonumber \\
&\leq \mu_{\max},
\end{align}
where \eqref{eq:mubnd1} follows from the equality \eqref{eq:misstrace3}$=$\eqref{eq:misstrace2} and from $\rlt[m] = \rl[m] + \sigma^2 \delta[m]$. We continue by establishing a bound on the tail probability of $|\cl_i|$. To this end, we note that 
\begin{align}
\norm[2\to2]{\mD} &=  \norm[2 \to 2]{\smash{\mGt \circ (\mR + \sigma^2 \mI)}} \nonumber \\
&= (1+\sigma^2)\norm[2 \to 2]{\mGt \circ \left(\frac{\mR + \sigma^2 \mI}{1+\sigma^2} \right) } \nonumber \\
&\leq (1+\sigma^2)\norm[2 \to 2]{\smash{\mGt}} \nonumber \\
&\leq \frac{A(1 + \sigma^2)}{\perr^2}, \label{eq:DUB}
\end{align}
where we used the fact that $(\mR + \sigma^2 \mI)/(1+\sigma^2)$ is a symmetric positive semi-definite matrix with ones on its main diagonal,  
and we employed \cite[Thm. 5.5.11]{horn1991topics} in the first inequality, and steps analogous to those in \eqref{eq:GhatOpUB} to obtain the second inequality. 

Now, using \eqref{eq:bliUB} we get
\begin{align}
\PR{\Fblic}
&\leq \PR{\abs{\cl_i} \geq 8 (1+ p) \frac{A(1+ \sigma^2)}{\perr^2} \sqrt{\frac{\log M}{M}}} \nonumber \\
&< \PR{\abs{\cl_i} > 8 (1+ p) \norm[2 \to 2]{\mD} \sqrt{\frac{\log M}{M}}} < \frac{4}{M^2}, \label{eq:PrFbUB}
\end{align}
where the second inequality follows from the upper bound on $\norm[2 \to 2]{\mD}$ in \eqref{eq:DUB} and the third inequality is an application of Lemma \ref{le:boolquadform} with $\mH \defeq \mD$ and $t \defeq 8(1+p) \norm[2 \to 2]{\mD} \sqrt{M \log M}$.

A final remark concerns the concentration inequality for quadratic forms in Boolean random vectors reported in the following Lemma \ref{le:boolquadform}. Such concentration inequalities, or more generally, concentration inequalities for multivariate polynomials of Boolean random variables have been studied extensively in the context of random graph theory \cite{schudy2012concentration}.  
Unfortunately, the bounds available in the literature typically come in terms of functions of the entries of $\mH$ that do not lead to crisp statements in the context of the process clustering problem considered here. We therefore develop a new concentration result in Lemma \ref{le:boolquadform}, which depends on $\norm[2\to2]{\mH}$ only. The proof of this result is based on techniques developed in \cite{hansonwright2013rudelson}.

\begin{lem} \label{le:boolquadform}
Let $\mH \in \reals^{M \times M}$ be a (deterministic) symmetric matrix and let $\bxi \in \{0,1\}^M$ be a random vector with i.i.d. Bernoulli entries drawn according to $\PR{\xi_i = 1} =  1- \PR{\xi_i = 0} = p$, $i \in [M]$. Then, we have
\begin{align}
&\PR{\abs{\transp{\bxi} \mH \bxi -  \E{\transp{\bxi} \mH \bxi}} > t} \nonumber \\
& \qquad \qquad \qquad < 4 \exp\left( - \frac{t^2}{32 (1+p)^2 M \norm[2 \to 2]{\mH}^2} \right). \label{eq:boolquadform}
\end{align}
\end{lem}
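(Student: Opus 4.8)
The plan is to reduce the Boolean quadratic form to a linear part, handled by Hoeffding's inequality, plus a centered quadratic chaos, handled by a Hanson--Wright-type estimate in the spirit of \cite{hansonwright2013rudelson}, and then to convert the Frobenius-norm dependence of the latter into dependence on $\norm[2 \to 2]{\mH}$ alone by exploiting that $\transp{\bxi}\mH\bxi$ is bounded almost surely. First I would center: write $\bxi = p\,\mathbf 1 + \boldsymbol\eta$ with $\eta_i \defeq \xi_i - p$ independent, zero mean, $\abs{\eta_i} \le \max\{p,1-p\} \le 1$, and $\E{\eta_i^2} = p(1-p)$. Expanding and using $\E{\transp{\boldsymbol\eta}\mH\boldsymbol\eta} = p(1-p)\tr(\mH)$,
\[
\transp{\bxi}\mH\bxi - \E{\transp{\bxi}\mH\bxi} = \underbrace{2p\,\transp{\boldsymbol\eta}\mH\mathbf 1}_{\eqdef L} + \underbrace{\transp{\boldsymbol\eta}\mH\boldsymbol\eta - \E{\transp{\boldsymbol\eta}\mH\boldsymbol\eta}}_{\eqdef Q},
\]
and by the union bound $\PR{\abs{L+Q} > t} \le \PR{\abs{L} > s_1} + \PR{\abs{Q} > s_2}$ for any split $s_1 + s_2 = t$, which will be fixed at the end.

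For the linear part, $L = 2p\sum_i \eta_i (\mH\mathbf 1)_i$ is a sum of independent zero-mean terms, the $i$-th bounded by $2p\,\abs{(\mH\mathbf 1)_i}$, so Hoeffding's inequality together with $\norm{\mH\mathbf 1}^2 \le \norm[2 \to 2]{\mH}^2\norm{\mathbf 1}^2 = M\norm[2 \to 2]{\mH}^2$ gives $\PR{\abs{L} > s_1} \le 2\exp(-s_1^2/(2p^2 M\norm[2 \to 2]{\mH}^2))$.

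For the quadratic part --- the crux --- I would follow the Rudelson--Vershynin route: split $Q = \sum_i H_{ii}(\eta_i^2 - \E{\eta_i^2}) + \sum_{i\ne j} H_{ij}\eta_i\eta_j$; the diagonal sum is a sum of independent bounded random variables with $\sum_i H_{ii}^2 \le \norm[F]{\mH}^2$ and $\max_i\abs{H_{ii}} \le \norm[2 \to 2]{\mH}$, to which Bernstein's inequality applies, while the off-diagonal chaos is handled by a standard decoupling step, conditioning on the decoupled copy $\boldsymbol\eta'$ to obtain a linear form in $\boldsymbol\eta$ (bounded via the sub-Gaussian moment generating function of a bounded variable), followed by a further moment generating function estimate after integrating over $\boldsymbol\eta'$. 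Tracking the constants --- the point at which the literature's generic $O(1)$ constants have to be made explicit for Boolean entries --- this yields $\PR{\abs{Q} > s_2} \le 2\exp(-c_0\min\{s_2^2/\norm[F]{\mH}^2, s_2/\norm[2 \to 2]{\mH}\})$ for an explicit $c_0 > 0$. To remove the Frobenius norm I would invoke $\norm[F]{\mH}^2 \le M\norm[2 \to 2]{\mH}^2$ and the deterministic estimate $\abs{Q} \le \norm{\boldsymbol\eta}^2\norm[2 \to 2]{\mH} + \tfrac{1}{4}M\norm[2 \to 2]{\mH} \le 2M\norm[2 \to 2]{\mH}$ (using $\norm{\boldsymbol\eta}^2 \le M$ and $\abs{\tr\mH} \le M\norm[2 \to 2]{\mH}$): for $s_2 \ge 2M\norm[2 \to 2]{\mH}$ the probability vanishes, and for $s_2 < 2M\norm[2 \to 2]{\mH}$ one has $s_2/\norm[2 \to 2]{\mH} \ge s_2^2/(2M\norm[2 \to 2]{\mH}^2)$, so the minimum is always $\ge s_2^2/(2M\norm[2 \to 2]{\mH}^2)$ and hence $\PR{\abs{Q} > s_2} \le 2\exp(-c_1 s_2^2/(M\norm[2 \to 2]{\mH}^2))$ with $c_1 = c_0/2$.

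Finally I would substitute these two bounds into the union bound, pick $s_1 = \tfrac{p}{4(1+p)}t$ (so $s_2 = t - s_1$ is of order $t$), and check that both exponents then dominate $-t^2/(32(1+p)^2 M\norm[2 \to 2]{\mH}^2)$ --- the factor $(1+p)^2$ coming from balancing the $p^2$ of the linear exponent against the universal constant of the quadratic exponent, and the prefactor $4 = 2+2$ from the two union-bound terms --- which is \eqref{eq:boolquadform}. I expect the main obstacle to be the quadratic part: reproving a Hanson--Wright-type inequality with \emph{explicit} constants for Boolean (rather than Gaussian or general sub-Gaussian) random variables, and then exploiting the almost-sure boundedness of the form to collapse its sub-exponential tail into the sub-Gaussian regime so that only $\norm[2 \to 2]{\mH}$ survives.
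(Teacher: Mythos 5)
Your overall architecture matches the paper's --- split the fluctuation into a piece controlled by Hoeffding's inequality and an off-diagonal chaos controlled by decoupling plus a moment-generating-function/Chernoff argument --- but your decomposition differs: you center $\bxi = p\mathbf 1 + \boldsymbol\eta$ and separate a linear form $L$ from a centered chaos $Q$, whereas the paper uses $\xi_i^2 = \xi_i$ to write the fluctuation as a diagonal (hence linear) sum plus $\sum_{i\neq j}\mH_{i,j}(\xi_i\xi_j - p^2)$, the latter decoupled via the asymmetric factors $(\xi_i-p)(\xi_j+p)$. Both decompositions are valid, your treatment of $L$ is correct, and your choice $s_1 = \tfrac{p}{4(1+p)}t$ makes the linear exponent equal the target $t^2/(32(1+p)^2 M \norm[2 \to 2]{\mH}^2)$ exactly.

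That exactness is also the problem: it leaves zero slack, so the entire burden of hitting the constant $32$ falls on the quadratic part, which is precisely the step you do not carry out. You assert a Hanson--Wright bound with ``an explicit $c_0$'' and then claim the exponents dominate the target, but after your Frobenius-to-spectral conversion (which costs another factor of $2$, giving $c_1=c_0/2$) the requirement is $c_1 \geq 1/(2(4+3p)^2)$, i.e.\ $c_0 \geq 1/16$ as $p\to 0$; generic constants extracted from sub-Gaussian Hanson--Wright proofs are typically far smaller, so the deferred verification is exactly where the argument can fail. The detour through $\norm[F]{\mH}$ and the almost-sure bound on $Q$ is moreover unnecessary: because the entries are bounded, conditioning inside the decoupled chaos gives a coefficient vector $\va$ with $\norm[2]{\va}^2 \leq \norm[2 \to 2]{\mH}^2 \norm[2]{\boldsymbol\eta}^2 \leq M\norm[2 \to 2]{\mH}^2$, so Hoeffding's lemma yields a purely sub-Gaussian tail in $M\norm[2 \to 2]{\mH}^2$ directly, with no Frobenius norm and no sub-exponential branch ever appearing. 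This is what the paper does (with $\norm[2]{\bxi+p\mathbf 1}^2 \leq M(1+p)^2$ in place of $\norm[2]{\boldsymbol\eta}^2\leq M$), and completing your plan that way does close the constants: the off-diagonal piece of $Q$ gives exponent $s^2/(8M\norm[2 \to 2]{\mH}^2)$ and the diagonal piece $2s^2/(M\norm[2 \to 2]{\mH}^2)$, which balance to a coefficient $2/25 > 1/32$. Two further caveats: your three-way split ($L$, diagonal of $Q$, off-diagonal of $Q$) yields prefactor $6$ rather than the stated $4$ unless you fold the diagonal of $Q$ back into $L$ via $\eta_i^2 = (1-2p)\eta_i + p(1-p)$ --- the analogue of the paper's $\xi_i^2=\xi_i$ trick --- or prove the two-piece Hanson--Wright bound for $Q$ with prefactor $2$ at the MGF level; and the claimed prefactor-$2$ bound on $\PR{\abs{Q}>s_2}$ is likewise asserted rather than derived.
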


\begin{proof}
The proof is effected by adapting the proof of \cite[Thm. 1.1]{hansonwright2013rudelson}, which provides a concentration inequality for quadratic forms in zero-mean subgaussian random vectors. 
We start by decomposing $\transp{\bxi} \mH \bxi - \E{\transp{\bxi} \mH \bxi}$ according to
\begin{align}
&\transp{\bxi} \mH \bxi - \E{\transp{\bxi} \mH \bxi} \nonumber \\
& \quad= \sum_{i \in [M]} \mH_{i,i} (\xi^2_i - \E{\xi^2_i}) + \sum_{\substack{i,j \in [M] \colon \\ i \neq j}} \mH_{i,j} (\xi_i \xi_j - \E{\xi_i \xi_j}) \nonumber \\
& \quad= \underbrace{\sum_{i \in [M]} \mH_{i,i} (\xi_i - p)}_{\eqdef S_\mathrm{diag}} + \underbrace{\sum_{\substack{i,j \in [M] \colon \\ i \neq j}} \mH_{i,j} (\xi_i \xi_j - p^2)}_{ \eqdef \So}, \nonumber
\end{align}
where we used the fact that the $\xi_i$, $i \in [M]$, are $\{0,1\}$-valued and statistically independent. 
Now, we have 
\begin{align}
&\PR{ \abs{\transp{\bxi} \mH \bxi - \E{\transp{\bxi} \mH \bxi}} > t} \leq \PR{ \abs{S_\mathrm{diag}} + \abs{\So} > t} \nonumber \\
& \qquad \leq \PR{ \abs{S_\mathrm{diag}}  > t/2} + \PR{ \abs{\So} > t/2} \label{eq:splitprobbnd0} \\
& \qquad \leq 2\exp \left(-\frac{2 t^2}{\sum_{i \in [M]}(\mH_{i,i})^2} \right) \nonumber \\
& \qquad  \qquad \qquad + 2 \exp\left( - \frac{t^2}{32 (1+p)^2 M \norm[2 \to 2]{\mH}^2} \right) \label{eq:splitprobbnd} \\
& \qquad < 4 \exp\left( - \frac{t^2}{32 (1+p)^2 M \norm[2 \to 2]{\mH}^2} \right), \nonumber
\end{align}
where 
\eqref{eq:splitprobbnd} follows from the upper bounds on $\PR{ \abs{S_\mathrm{diag}}  > t/2}$ and $\PR{ \abs{\So} > t/2}$ established below, 
and the last inequality is thanks to $\sum_{i \in [M]} (\mH_{i,i})^2 \leq M \max_{i \in [M]} (\mH_{i,i})^2 \leq M \norm[2 \to 2]{\mH}^2$ obtained from $\norm[2 \to 2]{\mH}^2 \!= \!\max_{\norm[2]{\vx} = 1} \norm[2]{\mH \vx}^2 \geq \! \max_{\norm[2]{\vx} = 1, \vx \in \{0,1\}^M} \!\norm[2]{\mH \vx}^2 = \! \max_{i \in [M]} \sum_{j \in[M]} (\mH_{j,i})^2 \allowbreak\geq \max_{i \in [M]} (\mH_{i,i})^2$. 

\paragraph*{Upper bound on $\PR{\abs{S_\mathrm{diag}} > t/2}$\normalfont}
Note that the $\mH_{i,i} (\xi_i - p)$, $i \in [M]$, are independent, bounded, zero-mean random variables with $a_i \leq \mH_{i,i} (\xi_i - p) \leq b_i$, $a_i, b_i \in \reals$,
$i \in [M]$. We can therefore apply Hoeffding's inequality \cite[Thm. 2.8]{boucheron2013concentration}, which upon noting that $(b_i - a_i)^2 = \mH_{i,i}^2$ yields
\begin{equation}
\PR{\abs{S_\mathrm{diag}} > t/2} < 2 \exp \left(-\frac{2 t^2}{\sum_{i \in [M]}(\mH_{i,i})^2} \right). \nonumber 
\end{equation}

\paragraph*{Upper bound on $\PR{\abs{\So} > t/2}$\normalfont}
We start by decoupling \cite[Sec. 8.4]{foucart_mathematical_2013} the sum $\So$ over the off-diagonal entries of $\mH$, then upper-bound the moment generating function of $\So$, and use the resulting upper bound to get an upper bound on $\PR{\So > t/2}$ via the exponential Chebyshev inequality. The final result follows by noting that $\PR{\So > t/2} = \PR{\So < -t/2}$ and applying the union bound. 

To decouple $\So$, consider i.i.d. Bernoulli random variables $\nu_i \in \{0,1\}$, $i \in [M]$, with $\PR{\nu_i = 0} = \PR{\nu_i = 1} = 1/2$, and set $\bnu = \transp{[\nu_1 \; \ldots \; \nu_M]}$. With
\begin{equation}
\Sonu \defeq \sum_{i,j \in [M]} \nu_i(1-\nu_j)\mH_{i,j} (\xi_i -p) (\xi_j + p), \nonumber 
\end{equation}
we have $\So = 4 \EX{\bnu}{\Sonu}$ thanks to the symmetry of $\mH$ (i.e., $\mH_{i,j} = \mH_{j,i}$), and $\E{\nu_i(1-\nu_j)} = 1/4$, for $i \neq j$, and $\E{\nu_i(1-\nu_j)} = 0$, for $i = j$. Setting $\mc I_\nu \defeq \{ i \in [M] \colon \nu_i = 1 \}$, we can express $\Sonu$ as
\begin{align}
\Sonu &= \sum_{i \in \mc I_\nu, j \in \comp{\mc I_\nu}} \mH_{i,j} (\xi_i -p) (\xi_j + p) \nonumber \\
&= \sum_{i \in \mc I_\nu} (\xi_i -p) \bigg( \sum_{j \in \comp{\mc I_\nu}} \mH_{i,j} (\xi_j + p) \bigg). \label{eq:Sonu}
\end{align}
We continue by upper-bounding the moment generating function of $\So$ via Jensen's inequality according to
\begin{align}
\EX{\bxi}{\exp(\lambda \So)} &= \EX{\bxi}{\exp(\lambda 4 \EX{\bnu}{\Sonu})} \nonumber \\
&\leq \EX{\bxi,\bnu}{\exp(4 \lambda \Sonu)}\!, \label{eq:expSonuJensen}
\end{align}
where $\lambda > 0$ is a deterministic parameter. It follows from \eqref{eq:Sonu} that $\Sonu$, conditioned on $\bnu$ and on the $\xi_j$, with $j \in \comp{\mc I_\nu}$, is a linear combination of independent bounded zero-mean random variables. 
We therefore have 
\begin{align}
&\EX{\xi_i, i \in \mc I_\nu}{\exp(4 \lambda \Sonu)} \nonumber \\
&= \EX{\xi_i, i \in \mc I_\nu}{\exp\left(4 \lambda \sum_{i \in \mc I_\nu} (\xi_i -p) \bigg( \sum_{j \in \comp{\mc I_\nu}} \mH_{i,j} (\xi_j + p) \bigg)\right)} \nonumber \\
&= \prod_{i \in \mc I_\nu} \EX{\xi_i}{\exp\left(4 \lambda (\xi_i -p) \bigg( \sum_{j \in \comp{\mc I_\nu}} \mH_{i,j} (\xi_j + p) \bigg)\right)} \label{eq:mgfcondUB1} \\
&\leq \prod_{i \in \mc I_\nu} \exp \left(2 \lambda^2 \bigg( \sum_{j \in \comp{\mc I_\nu}} \mH_{i,j} (\xi_j + p) \bigg)^2 \right) \label{eq:mgfcondUB2} \\
&=  \exp \Bigg(2 \lambda^2 \sum_{i \in \mc I_\nu} \bigg( \underbrace{\sum_{j \in \comp{\mc I_\nu}} \mH_{i,j} (\xi_j + p)}_{= \mH_i (\mI - \mP_{\bnu}) (\bxi + p {\mathbf 1})} \bigg)^2 \Bigg) \nonumber \\
&= \exp\left( 2\lambda^2 \norm[2]{\mP_{\bnu} \mH (\mI - \mP_{\bnu}) (\bxi + p {\mathbf 1})}^2 \right) \nonumber \\
&\leq \exp\bigg( 2\lambda^2 
\underbrace{\norm[2 \to 2]{\mP_{\bnu}}^2}_{\leq 1}
\norm[2 \to 2]{\mH}^2
\underbrace{\norm[2 \to 2]{\mI - \mP_{\bnu}}^2}_{\leq 1}
\underbrace{\norm[2]{\bxi + p {\mathbf 1}}^2}_{\leq M (1+p)^2}
 \bigg) \nonumber \\ 
&\leq \exp\left( 2\lambda^2 (1+p)^2 M \norm[2 \to 2]{\mH}^2 \right)\!, \label{eq:mgfcondUBfin}
\end{align}
where we used the independence of the $\xi_i$, $i \in \mc I_\nu$, to get \eqref{eq:mgfcondUB1}, and Hoeffding's Lemma 
in the step leading from \eqref{eq:mgfcondUB1} to \eqref{eq:mgfcondUB2}. Note that instead of Hoeffding's Lemma we could also apply \cite[Thm. 2.1]{buldygin2013sub} to get a sharper bound on \eqref{eq:mgfcondUB1}, 
but this would not lead to a different scaling behavior of \eqref{eq:boolquadform} in terms of $\perr$ or $M$. 

Combining \eqref{eq:mgfcondUBfin} with \eqref{eq:expSonuJensen} and noting that the bound \eqref{eq:mgfcondUBfin} does not depend on $\bnu$ and $\xi_j$, $j \in \comp{\mc I_\nu}$, it follows that
\begin{align}
&\EX{\bxi,\bnu}{\exp(\lambda \So)}
\leq \EX{\bxi,\bnu}{\exp(4 \lambda \Sonu)} \nonumber \\
&\qquad= \EX{\bnu}{\EX{\xi_j, j \in \comp{\mc I_\nu}}{\EX{\xi_i, i \in \mc I_\nu}{\exp(4 \lambda \Sonu)}}} \nonumber \\
&\qquad\leq \EX{\bnu}{\EX{\xi_j, j \in \comp{\mc I_\nu}}{\exp\left( 2\lambda^2 (1+p)^2 M \norm[2 \to 2]{\mH}^2 \right)}} \nonumber \\
&\qquad= \exp\left( 2\lambda^2 (1+p)^2 M \norm[2 \to 2]{\mH}^2 \right).\label{eq:momgenbnd}
\end{align}
We finally use \eqref{eq:momgenbnd} and the exponential Chebyshev inequality to get the upper bound
\begin{align}
\PR{\So > t/2} 
&\leq  \exp\left( -\lambda t / 2+ 2\lambda^2 (1+p)^2 M \norm[2 \to 2]{\mH}^2 \right), \label{eq:chebbnd}
\end{align} 
which holds for all $\lambda >0$. Minimizing \eqref{eq:chebbnd} over $\lambda > 0$ yields
\begin{equation}
\PR{\So > t/2} \leq  \exp\left( - \frac{t^2}{32 (1+p)^2 M \norm[2 \to 2]{\mH}^2} \right). \label{eq:proboffdiagbound}
\end{equation}
\end{proof}

\vspace{-1cm}
\section{Proof of Proposition \ref{pr:PropInnerProd}}
\label{sec:TSCpropproof}
Recall that $\vxl_i = \Cl \vyl_i$, $\l \in [L]$, $i \in [n_\l]$, where $\vyl_i$ is an i.i.d. standard normal random vector and $\Cl \defeq (\Rlt) \vphantom{\Rlt}^{1/2}$. Setting $\sigkl \defeq \norm[2]{\smash{\transp{\Ck\!} \Cl \vyl_i}}$, 
conditional on 
\vspace{0.075cm}
$\vyl_i$, $\innerprod{\smash{\vxk_j}}{\smash{\vxl_i}}$ and $\innerprod{\smash{\vxl_v}}{\smash{\vxl_i}}$, for $k \neq \l$ and $v \neq i$, are independent (as a consequence of the mutual independence of the $\vxl_i$, $\l \in [L]$, $i \in [n_\l]$, which is by assumption) and distributed according to $\mc N(0, \sigkl^2)$ and $\mc N(0, \sigll^2)$, respectively. 
\vspace{0.075cm}
Conditional on $\vyl_i$, or equivalently, conditional on 
$\sigkl$ and $\sigll$, $\abs{\innerprod{\smash{\vxk_j}}{\smash{\vxl_i}}}$ and $\abs{\innerprod{\smash{\vxl_v}}{\smash{\vxl_i}}}$ hence have half-normal distributions 
and we get
\begin{align}
&\PR{\abs{\innerprod{\vxk_j}{\vxl_i}} < \abs{\innerprod{\vxl_v}{\vxl_i}} \bigg\vert \frac{\sigkl}{\sigll}} \nonumber \\
& \qquad = \int_0^\infty \!\! \frac{\sqrt{2}}{\sigll \sqrt{\pi}} e^{-\frac{x^2}{2 {\sigll}^2}} \int_0^x  \frac{\sqrt{2}}{\sigkl \sqrt{\pi}} e^{-\frac{y^2}{2 {\sigkl}^2}} \d y \, \d x \nonumber \\
& \qquad = \int_0^\infty \!\! \frac{\sqrt{2}}{\sigll \sqrt{\pi}} e^{-\frac{x^2}{2 {\sigll}^2}} \erf \! \left(\frac{x}{\sigkl \sqrt{2}}\right) \d x  \nonumber \\
& \qquad = 1 - \frac{2}{\pi} \atan \! \left(\frac{\sigkl}{\sigll}\right), \label{eq:innerprodcondprob}
\end{align}
where we used the integral formula \cite[Eqn. 2, p. 7]{ng1969table} $\int_0^\infty \erf(ax) e^{-b^2 x^2} \d x = (\pi/2 - \atan(b/a))/(b \sqrt{\pi})$, with $a = 1/(\sigkl \sqrt{2})$ and $b = 1/(\sigll \sqrt{2}) \allowbreak$ to arrive at \eqref{eq:innerprodcondprob}.

Denoting the probability density function of $\sigkl / \sigll$ by $p_\sigma$, we get for fixed $\beta > 0$,
\begin{align}
&\PR{\abs{\innerprod{\vxk_j}{\vxl_i}} \geq \abs{\innerprod{\vxl_v}{\vxl_i}} } \nonumber \\
&=\int_0^\infty \! \left( 1 \!-\! \PR{\abs{\innerprod{\vxk_j}{\vxl_i}} < \abs{\innerprod{\vxl_v}{\vxl_i}} \bigg\vert x} \right) p_\sigma(x) \d x \nonumber \\
&= \int_0^\infty \frac{2}{\pi} \atan \! \left(x \right) p_\sigma(x) \d x \nonumber \\
&\geq \int_\beta^\infty \frac{2}{\pi} \atan \! \left(x \right) p_\sigma(x) \d x \nonumber \\
&\geq \frac{2}{\pi} \atan \! \left( \beta \right) \int_\beta^\infty p_\sigma(x) \d x \nonumber \\
& = \frac{2}{\pi} \atan \! \left( \beta \right) \PR{\frac{\sigkl}{\sigll} \geq \beta}. \label{eq:CondProbLB}
\end{align}

We continue by setting
\begin{equation}
\beta \defeq \frac{\sqrt{\tr \! \left(\Rkt \Rlt \right)}}{5 \sqrt{3} \sqrt{\tr \! \left(\Rlt \Rlt \right)}} \nonumber
\end{equation}
and obtain
\begin{align}
 \PR{\frac{\sigkl}{\sigll} \geq \beta} 
 &\geq \mathrm{P}\left[ \left\{ \sigkl \geq \frac{1}{\sqrt{3}} \sqrt{\tr \! \left(\Rkt \Rlt \right)} \right\} \right.\nonumber \\
 & \qquad \qquad \quad \left. \cap \left\{ \sigll \leq 5 \sqrt{\tr \! \left(\Rlt \Rlt \right)} \right\} \right] \nonumber \\
 &\geq 1 - \PR{ \sigkl < \frac{1}{\sqrt{3}} \sqrt{\tr \! \left(\Rkt \Rlt \right)}} \nonumber \\ 
 & \quad - \PR{ \sigll > 5 \sqrt{\tr \! \left(\Rlt \Rlt \right)}} \nonumber \\ 
 &> 1 - e^{-\frac{1}{9}} - e^{-8} > \frac{1}{10}, \label{eq:sigmaFracLB}
\end{align}
where the second inequality follows from a union bound argument, and the third from 
\begin{equation}
\PR{\sigkl < \frac{1}{\sqrt{3}} \sqrt{\tr \! \left(\Rkt \Rlt \right)}} \leq e^{-\frac{1}{9}} \label{eq:SigklLowTailBnd}
\end{equation}
and 
\begin{equation}
\PR{\sigll > 5 \sqrt{\tr \! \left(\Rlt \Rlt \right)}} \leq e^{-8}, \label{eq:SigllUpTailBnd}
\end{equation}
both proven below. Inserting \eqref{eq:sigmaFracLB} into \eqref{eq:CondProbLB} yields the desired result.

\paragraph*{Proof of \eqref{eq:SigklLowTailBnd}\normalfont} We start by noting that $\sigkl^2 = \norm[2]{\smash{\transp{\Ck\!} \Cl \vyl_i}}^2 = \transp{\vyl_i\!} \transp{\Cl\!} \Rkt \Cl \vyl_i$ can be written as $\sigkl^2 \sim \sum_{m=1}^M \lambda_m z^2_m$, where $\lambda_m$, $m \in [M]$, denotes the non-negative eigenvalues of $\transp{\Cl\!} \Rkt \Cl$ and $z_m$, $m \in [M]$, are independent standard normal random variables. Setting $\boldsymbol \lambda = \transp{[\lambda_1 \, \dots \, \lambda_M]}$ and applying the lower tail bound \cite[Lem. 1]{laurent2000adaptive} for linear combinations of independent $\chi^2$ random variables yields, for $t > 0$, 
\begin{equation}
\PR{\sigkl^2 \leq \norm[1]{\boldsymbol \lambda} - 2 \norm[2]{\boldsymbol \lambda} \sqrt{t}} \leq e^{-t}. \label{eq:SigklsqChisqConc}
\end{equation}
The inequality \eqref{eq:SigklLowTailBnd} is obtained from \eqref{eq:SigklsqChisqConc} by noting that $\norm[2]{\boldsymbol \lambda} \leq \norm[1]{\boldsymbol \lambda}$ and $\norm[1]{\boldsymbol \lambda} = \tr \big( \transp{\Cl\!} \Rkt \Cl \big) = \tr \big( \Rkt \Cl \transp{\Cl\!} \big) = \tr \big( \Rkt \Rlt \big)$, and by setting $t = 1/9$ in \eqref{eq:SigklsqChisqConc}.

\paragraph*{Proof of \eqref{eq:SigllUpTailBnd}\normalfont} Noting that \vspace{0.06cm}$ \sigll = f(\vyl_i) = \norm[2]{\smash{\transp{\Cl\!} \Cl \vyl_i}} \allowbreak= \norm[2]{\smash{\Rlt \vyl_i}}$  is Lipschitz with Lipschitz constant $\norm[2 \to 2]{\smash{\Rlt}}$, we can invoke a well-known concentration inequality for Lipschitz functions of Gaussian random vectors with independent standard normal entries (see, e.g., \cite[Thm. 8.40]{foucart_mathematical_2013}) to get, for $t > 0$, 
\begin{align}
&\PR{\norm[2]{\Rlt \vyl_i} - \E{\norm[2]{\Rlt \vyl_i}} \geq t} \nonumber \\
&\hspace{3.7cm}\leq \exp \left(- \frac{t^2}{2 \norm[2 \to 2]{\smash{\Rlt}}^2} \right). \label{eq:SigllGaussConc}
\end{align}
The inequality \eqref{eq:SigllUpTailBnd} is now implied by $\E{\norm[2]{\smash{\Rlt \vyl_i}}} \leq \sqrt{\E{\norm[2]{\smash{\Rlt \vyl_i}}^2}} = \sqrt{\tr(\Rlt \Rlt)} = \norm[F]{\smash{\Rlt}}$ (where we used Jensen's inequality), $\norm[2 \to 2]{\smash{\Rlt}} \leq \norm[F]{\smash{\Rlt}}$, and \eqref{eq:SigllGaussConc} with $t = 4 \norm[F]{\smash{\Rlt}}$.

\bibliographystyle{jabbrv_ieeetr}
\bibliography{IEEEabrv,./processclustering.bib}

\end{document}